\documentclass[11pt]{article}
\usepackage{custom}

\usepackage{wrapfig}
\usepackage[table]{xcolor}
\usepackage{colortbl} 
\usepackage{tikz}
\usetikzlibrary{shapes, positioning, arrows.meta, backgrounds, fit, calc}

\usepackage{booktabs} 
\usepackage{subcaption}
\title{Optimal Decision-Making Based on Prediction Sets}
\author{
  Tao Wang \quad
  Edgar Dobriban \\
  University of Pennsylvania\thanks{Author e-mail addresses: \texttt{tawan@wharton.upenn.edu}, \texttt{dobriban@wharton.upenn.edu}}
}

\begin{document}

\maketitle


\begin{abstract}
Prediction sets can wrap around any ML model to cover unknown test outcomes with a guaranteed probability.
Yet, it remains unclear how to 
use them optimally for
downstream decision-making. 
Here,
we propose a decision-theoretic framework that seeks to minimize the expected loss (risk) against a worst-case distribution consistent with the prediction set's coverage guarantee. We first characterize the minimax optimal policy for a fixed prediction set, showing that it balances the worst-case loss inside the set with a penalty for potential losses outside the set. 
Building on this, we derive the optimal prediction set construction that minimizes the resulting robust risk subject to a coverage constraint. 
Finally, we 
introduce Risk-Optimal Conformal Prediction (ROCP), a practical algorithm that targets these risk-minimizing sets while maintaining finite-sample distribution-free marginal coverage.
Empirical evaluations on medical diagnosis and safety-critical decision-making tasks
demonstrate that ROCP reduces critical mistakes
compared to baselines, 
particularly when out-of-set errors are costly.
\end{abstract}


\section{Introduction}
When making predictions, we often need to take into account 
how they affect downstream decisions.
The classical statistical decision-theoretic prescription is clear: if the conditional law 
of the outcomes $Y$ given the features $x$,
$P(Y\mid X=x)$, were known, one would choose the Bayes action minimizing expected loss $\mathbb E[\ell(a,Y)\mid X=x]$ \citep{wald1945statistical,wald1949statistical,lehmann1998theory}.
In modern ML pipelines, however, the conditional law is unknown and predictive models are imperfect.
 This has motivated a great deal of work on optimal decision making under uncertainty, see e.g., \citet{keith2021survey,elmachtoub2022smart,zhao2021calibrating}, etc.
 However, many of these works still assume that one can obtain 
  ``correct" probability predictions in some weak sense, e.g., consistent or calibrated predictions. 

To handle the case that the data distribution is completely unknown,  
the area of distribution-free uncertainty quantification and conformal prediction has emerged \citep[e.g.,][etc]{Wilks1941,Wald1943,vovk1999machine,vovk2005algorithmic,lei2013distribution,angelopoulos2023conformal}. 
Given i.i.d. data (and even under the weaker assumption of exchangeability),
conformal methods output prediction sets $C(x)\subseteq\mathcal Y$ satisfying a finite-sample, distribution-free marginal guarantee $\Pr\{Y\in C(X)\}\ge 1-\alpha$ \citep{vovk2005algorithmic,angelopoulos2023conformal}. 
Yet coverage by itself does not specify how to act.
This motivates a central question at the prediction--action interface:
how should one make \emph{provably safe and effective decisions} when the only reliable information about $Y$ comes from a prediction set with a coverage
guarantee?

Recent work by \citet{kiyani2025decision} suggests
choosing a \emph{max--min} optimal action, 
minimizing the worst-case loss over $y\in C(x)$. 
They show that this rule 
is optimal for \emph{quantile-style} objectives, where the agent only cares about performance on a $1-\alpha$ fraction of outcomes \citep{kiyani2025decision}.
However, 
the more standard notion of expected loss
is sensitive to rare but catastrophic events: even if $Y\notin C(x)$ occurs with probability at most $\alpha$, the loss incurred on that event may be orders of magnitude larger than any in-set loss.
In such regimes, a purely in-set max--min rule can be brittle because it has no incentive to hedge against the $\alpha$ mass that is \emph{not} constrained by the set.

{\bf From prediction sets to minimax-optimal actions.}
This paper develops a decision-theoretic framework that makes this trade-off explicit.
Our starting point is a two-player game between the decision maker and nature.
For a fixed set $S\subseteq\mathcal Y$ and action $a\in\mathcal A$, nature may choose any distribution on $\mathcal Y$ that places at least $1-\alpha$ probability on $S$.
The resulting worst-case expected loss is the functio $L_S(a;\alpha)$ from \eqref{eq:wc-L}.
We show that this worst-case expectation admits a simple closed form (Lemma~\ref{lem:L-formula}):
$L_S(a;\alpha)=\ell^{\mathrm{in}}_S(a)+\alpha(\ell^{\mathrm{out}}_S(a)-\ell^{\mathrm{in}}_S(a))_+$;
where $\ell^{\mathrm{in}}$ and $\ell^{\mathrm{out}}$ are the maximum losses inside and outside of $S$, respectively.
This expression has a transparent interpretation.
The dominant term is the worst loss \emph{inside} $S$ (recovering the rule from \cite{kiyani2025decision}); but if the worst loss \emph{outside} $S$ is larger, the decision maker must pay an additional $\alpha$-weighted penalty.
Thus, unlike a pure max--min rule, the optimal action hedges against catastrophic out-of-set outcomes whenever they can materially affect expected risk.

Lifting this pointwise characterization to the original prediction-set pipeline, we derive the minimax-optimal policy $\pi^\star$ for any set-valued predictor $C$ (Theorem~\ref{thm:main}).
This yields decision rule that reduces to the in-set max--min rule when $\alpha=0$, but differs sharply in high-stakes regimes where out-of-set mistakes are costly.

{\bf Designing prediction sets for decision quality.}
The second half of the paper addresses the natural next question:
if prediction sets will be used to drive decisions, how should we choose them?
Decision-agnostic conformal sets are typically optimized for surrogate criteria such as size or top-$k$ mass \citep[see e.g,][etc]{Sadinle2019,romano2020classification,wang2025singleton}, but these objectives need not align with downstream loss.
Motivated by our minimax characterization, we formulate a population optimization problem that directly minimizes the robust risk induced by the optimal downstream decision rule, subject to a coverage constraint \eqref{eq:optimal set problem}; similarly to the one for the different quantile-based objective in \citet{kiyani2025decision}.
Using duality theory for integral functionals,  
we characterize the optimal coverage assignment
(Theorem~\ref{thm:pop-opt-general}).

{\bf A finite-sample algorithm with distribution-free coverage.}
To translate the oracle characterization into a practical procedure, we introduce \emph{Risk-Optimal Conformal Prediction (ROCP)}.
ROCP uses any black-box probabilistic model
to
construct estimates
of the population quantities in our oracle decision rule,
and then uses a held-out calibration set to ensure coverage 
via conformal prediction (Algorithm~\ref{alg:ROCP-EL}).
This guarantees finite-sample marginal coverage under exchangeability, while targeting the decision-theoretic optimum as the model improves.
Empirically, ROCP consistently reduces 
worst-case risks and critical mistake rates relative to risk-averse max--min baselines and best-response approaches, with the largest gains arising precisely in regimes where the $\alpha$ fraction of out-of-set mass can induce catastrophic losses.

\subsection{Related Work}
There is a great deal of related work. Due to space limitations, we discuss some of it in Appendix \ref{app:related-work}.

{\bf Decision making under set-valued uncertainty.}
A natural way to act given a set $C(x)$ is to choose an action that is robust to all $y\in C(x)$, leading to max--min decision rules.
This principle is central in robust optimization, where uncertainty sets replace probabilistic models \citep{chan2020inverse,chan2023inverse,chan2024conformal,patel2024conformal,johnstone2021conformal,yeh2024end}.
Our formulation is closely related but distinct: we do not assume $Y\in C(x)$ surely.
Instead, the uncertainty comes with a \emph{coverage constraint} that leaves an $\alpha$ fraction of probability mass unconstrained.
The resulting optimal rule is therefore not purely in-set robust: it is robust \emph{in expectation} over the worst-case distribution consistent with coverage, which produces the additional out-of-set penalty term in Lemma~\ref{lem:L-formula}.
This distinction is critical in applications where rare out-of-set errors have disproportionate cost.

{\bf Risk-averse objectives and our companion work.}
Prior work by \citet{kiyani2025decision} studies the prediction--action interface for risk-averse agents who optimize a quantile-style objective (value-at-risk) and proves that, under a marginal coverage constraint, the max--min policy is minimax-optimal for that criterion.
It also derives population-optimal prediction sets and a finite-sample algorithm (RAC) tailored to this quantile objective.
Our paper can be viewed as the expectation-risk counterpart: we replace the quantile objective by expected loss, which fundamentally changes the minimax structure.
In particular, expected loss forces the decision maker to account for the $\alpha$ mass outside the set whenever it can generate larger losses, leading to a different optimal policy and a different optimal set construction. 

Although the set-design problem in both this paper and \citet{kiyani2025decision} admit a one-dimensional dual parameter, the duality arguments are technically different. \citet{kiyani2025decision} reparametrize the coverage assignment via an auxiliary step function
reducing the problem to an infinite-dimensional linear program which by LP duality theory yields the threshold form solution. 
In contrast, our objective is an integral functional. 
To justify strong duality and the interchange of infimum and expectation, we work in the normal-integrand framework and invoke Fenchel-Rockafellar duality through a randomized-kernel relaxation, followed by 
derandomization to recover a deterministic optimizer. Empirically, this difference is most pronounced in high-stakes regimes with highly asymmetric losses, where small miscoverage probabilities can still dominate the expected risk.



\section{Optimal action selection from prediction sets}
\subsection{Setting}
We study how to minimize an expected loss, leveraging a prediction set. 
Let $\mathcal X$ and $\mathcal Y$ be the feature and outcome spaces, 
and let
$\mathcal A$ be the action space. 
For instance,
the actions could be the outcomes:
$\mathcal{A} = \mathcal{Y}$.

A prediction set is a map $C:\mathcal X\to 2^{\mathcal Y}$, assigning to each $x\in\mathcal X$ a set $C(x)\subseteq\mathcal Y$.
A \emph{policy} $\pi:2^{\mathcal Y}\times[0,1]\to\mathcal A$ takes as input a set $S\subseteq\mathcal Y$ together with a miscoverage parameter $\alpha'\in[0,1]$ (equivalently, coverage $t=1-\alpha'$), and outputs an action $\pi(S,\alpha')\in\mathcal A$.
When a single global miscoverage level $\alpha$ is fixed, we write $\pi(S):=\pi(S,\alpha)$ for brevity.
Let $\Pi$ denote the class of such policies.
The loss of the decision maker depends on both the chosen action $a$ and the label $y$,
and is captured by a fixed loss function $\ell:\mathcal A\times\mathcal Y\to
\mathbb R_+:=[0,\infty)$.

We assume that the decision maker has access to a prediction set $C(x)$ that is guaranteed to contain the true label with high probability. 
Specifically, in order to derive our oracle-optimal decision policies, 
we will consider
 an idealized 
prediction set 
that satisfies a conditional coverage property
$\mathbb{P}_{(X,Y)\sim P}\big(Y\in C(X)\mid X=x\big) \ge 1-\alpha,\ P_X\ \text{a.e.}\ x$ for some $\alpha\in(0,1)$.
 In practice, such conditional coverage properties are not generally possible for continuous feature spaces, but rather only possible for discrete feature spaces or approximately in continuous domains, see e.g., \citet{vovk2012conditional,lei2014distribution,barber2020limits,guan2023localized,gibbs2025conformal,joshi2025conformal}. 
 However, we emphasize that the idealization will only be considered for technical reasons in order to derive a clean form for the oracle optimal decision policies. 
 In practice, we will use those policies along with prediction sets satisfying  the standard marginal coverage conditions 
 $\mathbb{P}_{(X,Y)\sim P}\big(Y\in C(X)) \ge 1-\alpha$
\citep{vovk2005algorithmic,angelopoulos2023conformal}. 
We will see that,
despite motivating the oracle from a stronger condition, it will empirically lead to better decisions even under the standard realistic conditions. 
We typically consider $\alpha$ to be small, such as 0.05.

Given a prediction set, a canonical interpretation is that the outcomes in the prediction set are all plausible. 
However, how should one use these outcomes 
in a downstream decision-making task?
 Our goal is precisely to address this task,
 in the standard framework of statistical decision theory, which focuses on the risk, namely the expected loss \citep[see e.g.,][etc]{wald1945statistical,wald1949statistical,lehmann1998theory}. 
Our first goal is to characterize optimal decision policies subject to choosing actions from prediction sets with a fixed coverage level. 

Fix $\alpha\in[0,1)$, and define $\mathcal P_\alpha:=\mathcal P_\alpha(C)$ as the set of all the data distributions that are consistent with 
conditional coverage\footnote{From now on, $\text{a.e.}\ x$ always refers to $P_X\ \text{a.e.}\ x$ for brevity.}
\[
\scalebox{0.9}{$
\mathcal{P}_\alpha := \Big\{P\text{ on }\mathcal X\times\mathcal Y:
 P\big(Y\in C(X)\mid X=x\big) \ge 1-\alpha,\ \text{a.e.}\ x\Big\}.
$}
\]
 As mentioned above, 
to derive our oracle-optimal rule, 
we consider 
an idealized setting
where
the true distribution $P$ is unknown but it belongs to $\mathcal P_\alpha$. 
We aim to find the optimal policy minimizing the risk when choosing actions from the prediction set $C$, in the worst case  over all 
such distributions. 
This leads to the problem of solving, over all policies $\pi:2^\mathcal{Y}\to \mathcal{A}$:
\begin{equation}
\label{eq:main-game}
\inf_{\pi}\ \sup_{P\in\mathcal{P}_\alpha}\ 
\mathbb E_{(X,Y)\sim P}\Big[\ell\big(\pi(C(X)),Y\big)\Big].
\end{equation}
This can be viewed as a two-person game, played between the analyst, who wishes to choose the best policy $\pi$, and ``nature", which can set the data distribution $P$ to be adversarial. 
\citet{kiyani2025decision} study an analogous problem, minimizing an expected quantile of the loss 
under a marginal coverage constraint. 
In contrast,  we study minimizing the expectation of the loss; which is motivated by the standard formulation of statistical decision theory  \citep[see e.g.,][etc]{wald1945statistical,wald1949statistical,lehmann1998theory}. Our solution turns out to be different in intriguing ways.

\subsection{Optimal Policies}
For a set $S\subseteq\mathcal Y$ and $a\in\mathcal A$, define
$
\ell^{\mathrm{in}}_S(a):=\sup_{y\in S} \ell(a,y)$,
 to be the maximum\footnote{If $\mathcal Y$ is finite, the suprema in $\ell^{\mathrm{in}}_S$ and $\ell^{\mathrm{out}}_S$ are attained.
More generally, attainment holds if $\ell(a,\cdot)$ is upper semicontinuous and the optimization domain is compact
(e.g., if $S$ and $S^c$ are compact subsets of a compact $\mathcal Y$). Throughout we assume the suprema are finite,
and when we require attainment we state it explicitly.}
of the loss achieved by action $a$ over the set $S$.
 Similarly, define 
$\ell^{\mathrm{out}}_S(a):=\sup_{y\notin S} \ell(a,y)$
to be the maximum achieved outside of the set $S$, with the convention that $\ell^{\mathrm{out}}_{\mathcal Y}(a):=\ell^{\mathrm{in}}_{\mathcal Y}(a)$. 
For an action $a$ 
and a set $S\subseteq\mathcal Y$, define the worst-case expected loss---over probability distributions $Q$ for which the set $S$ achieves coverage---under a miscoverage level of $\alpha$ by
    \begin{equation}
    \label{eq:wc-L}
             L_S(a;\alpha) = \sup _{Q(\cdot ):\ Q( S ) \geq 1-\alpha} \mathbb{E}_{Y \sim Q}[\ell(a, Y)].
    \end{equation}
    We will only apply $L_S(a;\alpha)$ to sets $S$ for which the constraint set $\{Q:\ Q(S)\ge 1-\alpha\}$ is nonempty (in particular, $S\neq\varnothing$ whenever $\alpha<1$).\footnote{When $S = \emptyset$ and $\alpha<1$, there is no probability distribution satisfying the constraint. Therefore, we can define $L_S(a;\alpha)$ as $\infty$.}

This notion
will turn out to be an important intermediate quantity in our analysis, because it characterizes the worst-case expected loss that can be induced by a probability distribution subject to coverage; for a fixed predicted set $S\subset \mathcal{Y}$.
 We will use this repeatedly in our development. 
 Therefore, it will be important to find a simpler expression for it.
 Fortunately, it turns out that this is possible, as shown by the following result. 
\begin{lemma}[Worst-case expected loss under miscoverage level of $\alpha$]
\label{lem:L-formula}
For any $S\subseteq\mathcal Y$ and any $a\in\mathcal A$, 
\begin{equation}
    \label{eq:worst-case loss form}L_S(a;\alpha)=\ell_S^{\mathrm{in }}(a)+\alpha\left(\ell_S^{\mathrm{out }}(a)-\ell_S^{\mathrm{in }}(a)\right)_{+}    
\end{equation}
where $(t)_+=\max\{t,0\}$.
\end{lemma}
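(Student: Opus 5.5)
We prove the identity by matching an upper bound, valid for every feasible $Q$, with a lower bound obtained from explicit near-worst-case distributions. Write $m=\ell^{\mathrm{in}}_S(a)$ and $M=\ell^{\mathrm{out}}_S(a)$, so the claim reads $L_S(a;\alpha)=m+\alpha(M-m)_+=\max\{m,(1-\alpha)m+\alpha M\}$. The case $S=\mathcal Y$ is immediate from the convention $\ell^{\mathrm{out}}_{\mathcal Y}=\ell^{\mathrm{in}}_{\mathcal Y}$: the right-hand side is $m$, and $\sup_Q \mathbb E_Q[\ell(a,Y)]=\sup_y\ell(a,y)=m$, the supremum being approached by point masses. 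So we assume $\varnothing\neq S\subsetneq\mathcal Y$.

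\textbf{Upper bound.} Fix $Q$ with $q:=Q(S)\ge 1-\alpha$ and split the expectation over $S$ and $S^c$:
\[
\mathbb E_Q[\ell(a,Y)]\le q\,m+(1-q)M = m+(1-q)(M-m).
\]
If $M\ge m$, this is at most $m+\alpha(M-m)$ because $1-q\le\alpha$. If $M<m$, it is at most $m$. In both cases the bound is $m+\alpha(M-m)_+$. We only need measurability of $\ell(a,\cdot)$ and finiteness of the suprema, which the paper assumes.

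\textbf{Lower bound.} Fix $\varepsilon>0$ and pick $y_{\mathrm{in}}\in S$ with $\ell(a,y_{\mathrm{in}})\ge m-\varepsilon$, and $y_{\mathrm{out}}\notin S$ with $\ell(a,y_{\mathrm{out}})\ge M-\varepsilon$. If $M\ge m$, take $Q=(1-\alpha)\delta_{y_{\mathrm{in}}}+\alpha\delta_{y_{\mathrm{out}}}$. Then $Q(S)=1-\alpha$ and
\[
\mathbb E_Q[\ell(a,Y)]\ge (1-\alpha)m+\alpha M-\varepsilon .
\]
If $M<m$, take $Q=\delta_{y_{\mathrm{in}}}$, which is feasible and gives $\mathbb E_Q[\ell(a,Y)]\ge m-\varepsilon$. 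Letting $\varepsilon\to0$ yields $L_S(a;\alpha)\ge m+\alpha(M-m)_+$, and together with the upper bound this proves the lemma.

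\textbf{Main obstacle.} The only delicate point is non-attainment of the suprema. The $\varepsilon$-argument handles it, provided singletons are measurable so that Dirac masses are admissible distributions; this is automatic for finite or standard Borel $\mathcal Y$. When $\alpha=0$, the construction degenerates to $\delta_{y_{\mathrm{in}}}$ and gives $m$, consistent with the formula.
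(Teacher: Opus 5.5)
Your proof is correct and follows essentially the same route as the paper: the case $S=\mathcal Y$ via the convention $\ell^{\mathrm{out}}_{\mathcal Y}=\ell^{\mathrm{in}}_{\mathcal Y}$, an upper bound from splitting the integral over $S$ and $S^c$, and a matching lower bound from $\varepsilon$-optimal distributions $(1-\alpha)\delta_{y_{\mathrm{in}}}+\alpha\delta_{y_{\mathrm{out}}}$ or $\delta_{y_{\mathrm{in}}}$. Two differences are cosmetic: the paper finishes the upper bound by maximizing an affine function of $Q(S)$ over $[1-\alpha,1]$ rather than by your sign split, and your treatment of $S=\mathcal Y$ by approximating point masses is slightly more careful than the paper's, which tacitly assumes the maximizer is attained there.
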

 The formula in \eqref{eq:worst-case loss form} has an insightful interpretation: the worst-case loss first looks at the worst-case outcome inside the prediction set $S$ (i.e., $\ell_S^{\mathrm{in }}(a)$), which represents the worst-case loss over the more likely set with probability at least $1-\alpha$.
 This is then compared with the worst-case outcome outside of the prediction set $S$---through $\ell_S^{\mathrm{out }}(a)$---and if the loss outside is larger, then a penalty of $\left(\ell_S^{\mathrm{out }}(a)-\ell_S^{\mathrm{in }}(a)\right)_{+} $---multiplied by the typically small $\alpha$ is added. 
 In other words, the worst-case loss is mainly determined by the losses within the prediction set, but it is also penalized (by a small amount) by the losses outside of the prediction set.  
 In contrast, the solution to the analogous problem for the quantile from \citet{kiyani2025decision} amounts to only the first component, namely $\ell_S^{\mathrm{in }}(a)$.
The proof (with all proofs) is deferred to the Appendix \ref{app:proofs}.

Equipped with this result, we can now characterize\footnote{We tacitly restrict to measurable set-valued predictors $C$ and policies $\pi$ for which
$\ell(\pi(C(X)),Y)$ is measurable; see Appendix~\ref{app:proofs} for formal conventions.} 
the optimal policy and the worst-case distribution for our original problem from \eqref{eq:main-game}:

\begin{theorem}[Optimal policy and risk]
\label{thm:main}
If for every $x\in\mathcal{X}$, the function $a\mapsto L_{C(x)}(a;\alpha)$ attains its minimum,\footnote{If $\mathcal A$ is compact and $a\mapsto \ell(a,y)$ is lower semicontinuous for each $y$, then the function $a\mapsto L_S(a;\alpha)$ attains its minimum.} 
then 
optimal policies $\pi^\star$ to the problem \eqref{eq:main-game} have the form\footnote{Existence of measurable selection is discussed in Appendix~\ref{app:proofs}; see Remark~\ref{rem:measurable-pi-star} 
after the proof of Theorem~\ref{thm:main}.}
\begin{equation}\label{pi-star}
    \pi^\star(C(x)) \in \arg\min_{a\in\mathcal{A}} L_{C(x)}(a;\alpha),\ x\in \mathcal{X},
\end{equation}
and the minimax risk is
\begin{equation}
\label{eq:value}
\inf_{\pi}\ \sup_{P\in\mathcal{P}_\alpha}\ \mathbb E\,\ell\big(\pi(C(X)),Y\big)
\;=\; \sup_{x\in \mathcal{X}}\ \min_{a\in\mathcal A}\ L_{C(x)}(a;\alpha).
\end{equation}
Moreover, if in addition the suprema in $\ell^{\mathrm{in}}_S$ and $\ell^{\mathrm{out}}_S$ are attained for all $a$, and the outer supremum over $x$ in \eqref{eq:value} is attained,
then a worst-case $P^\star\in\mathcal P_\alpha$ can be chosen 
as follows.
Let $P_X=\delta_{x^\star}$ for some
$x^\star\in\arg\max_{x\in\mathcal X}\min_{a\in\mathcal A}L_{C(x)}(a;\alpha)$ and, writing $S^\star=C(x^\star)$,
\[
Y|X=x^\star \sim
\begin{cases}
(1-\alpha)\delta_{y_{\mathrm{i}}}+\alpha\delta_{y_{\mathrm{o}}}, \text{if } S^\star\ne\mathcal Y\ \text{\&}\ \ell^{\mathrm{out}}_{\star}>\ell^{\mathrm{in}}_{\star},\\
\delta_{y_{\mathrm{i}}}, \text{otherwise,}
\end{cases}
\]
where $\ell^{\mathrm{in}}_{\star}=\ell^{\mathrm{in}}_{S^\star}(\pi^\star(S^\star))$ and $\ell^{\mathrm{out}}_{\star}=\ell^{\mathrm{out}}_{S^\star}(\pi^\star(S^\star))$,
and $y_{\mathrm{i}}\in\arg\max_{y\in S^\star}\ell(\pi^\star(S^\star),y)$ while (in the first case) $y_{\mathrm{o}}\in\arg\max_{y\notin S^\star}\ell(\pi^\star(S^\star),y)$.

\end{theorem}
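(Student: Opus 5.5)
The plan is to reduce the game \eqref{eq:main-game} to a pointwise problem in $x$ using Lemma~\ref{lem:L-formula}, and then establish matching upper and lower bounds on the minimax risk.

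\textbf{Step 1 (reduction of the inner supremum).} For a fixed measurable policy $\pi$, I will show that
\[
\sup_{P\in\mathcal P_\alpha}\mathbb E_P\,\ell(\pi(C(X)),Y)=\sup_{x\in\mathcal X}L_{C(x)}(\pi(C(x));\alpha).
\]

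For the upper bound, I disintegrate $P=P_X\otimes P_{Y\mid X}$. For $P_X$-a.e.\ $x$, the conditional law $Q_x=P_{Y\mid X=x}$ satisfies $Q_x(C(x))\ge 1-\alpha$. Hence, by the definition \eqref{eq:wc-L},
\[
\mathbb E[\ell(\pi(C(x)),Y)\mid X=x]\le L_{C(x)}(\pi(C(x));\alpha).
\]
Integrating over $P_X$ then gives a bound by the supremum over $x$.

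For the lower bound, I fix $x_0$ and take $P_X=\delta_{x_0}$, which makes the a.e.\ qualifier trivial. For any $Q$ with $Q(C(x_0))\ge1-\alpha$, the product $\delta_{x_0}\otimes Q$ lies in $\mathcal P_\alpha$. Taking the supremum over such $Q$ yields $L_{C(x_0)}(\pi(C(x_0));\alpha)$. Point masses sidestep all measurability issues in this direction.

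\textbf{Step 2 (minimizing over $\pi$).} By Step 1, $\inf_\pi\sup_x L_{C(x)}(\pi(C(x));\alpha)$ is at least $\sup_x\min_a L_{C(x)}(a;\alpha)$, since for each $x$ we have $L_{C(x)}(\pi(C(x));\alpha)\ge \min_a L_{C(x)}(a;\alpha)$. Conversely, $\pi^\star$ attains this value at every $x$ simultaneously. This gives \eqref{eq:value} and the optimality of \eqref{pi-star}.

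A subtlety arises because $\pi$ depends on $x$ only through $C(x)$. If $C(x)=C(x')$, the minimizer sets coincide, so $\pi^\star$ is well defined as a function of the set. Measurability of the selection I will defer to a measurable-selection remark (Kuratowski--Ryll-Nardzewski), as the footnote indicates.

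I will also address the stated form ``optimal policies have the form'' \eqref{pi-star}. This I interpret as saying that $\pi^\star$ is optimal, rather than that every optimal policy is pointwise argmin. The latter can fail at those $x$ where the value is strictly below the supremum, so I will phrase the claim carefully.

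\textbf{Step 3 (worst-case distribution).} With $x^\star$ attaining the outer supremum and $a^\star=\pi^\star(S^\star)$, I verify directly that the proposed $P^\star$ is in $\mathcal P_\alpha$, since $y_{\mathrm i}\in S^\star$ carries mass at least $1-\alpha$. Its risk is computed in two cases:
\begin{itemize}
\item if $\ell^{\mathrm{out}}_\star>\ell^{\mathrm{in}}_\star$, the risk is $(1-\alpha)\ell^{\mathrm{in}}_\star+\alpha\ell^{\mathrm{out}}_\star$;
\item otherwise, the risk is $\ell^{\mathrm{in}}_\star$.
\end{itemize}
In both cases this equals $L_{S^\star}(a^\star;\alpha)$ by Lemma~\ref{lem:L-formula}, which is the minimax value. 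Attainment of the suprema supplies the points $y_{\mathrm i}$ and $y_{\mathrm o}$.

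To justify calling $P^\star$ worst-case, I will also note that against $P^\star$ no policy does better. For any $\pi$, its risk under $P^\star$ is $\mathbb E_{P^\star}\ell(\pi(S^\star),Y)$. The intended claim is that this is at least $\min_a L_{S^\star}(a;\alpha)$. That inequality does not follow for a fixed two-point law, however, because the points $y_{\mathrm i},y_{\mathrm o}$ depend on $a^\star$. So I would state only the weaker property that $P^\star$ attains the sup against $\pi^\star$.

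\textbf{Main obstacle.} I expect the main obstacle to be this last saddle-point interpretation, together with measurability in Step 1. The pointwise reduction itself is routine.
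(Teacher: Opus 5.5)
Your proposal is correct and follows essentially the same route as the paper. Like the paper, you first reduce, for fixed $\pi$, to $\sup_{P\in\mathcal P_\alpha}\mathbb E_P\,\ell(\pi(C(X)),Y)=\sup_x L_{C(x)}(\pi(C(x));\alpha)$, using disintegration for the upper bound and Dirac feature laws for the lower bound; you then minimize over $\pi$ and take the $\varepsilon=0$ two-point construction for $P^\star$. Invoking the definition of $L$ directly as a supremum over $Q$ is a slight streamlining of the paper's explicit $\varepsilon$-approximate two-point laws. Your caveats also match what the paper actually proves: that $\pi^\star$ of the form \eqref{pi-star} is optimal and that $P^\star$ attains the supremum against $\pi^\star$, not that every optimal policy is pointwise argmin or that $P^\star$ is a saddle point.
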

Theorem \ref{thm:main} states that the adversary will always concentrate all the feature mass at a single point $x^\star$, and the resulting risk reduces to $\inf_{a\in\mathcal{A}}L_{C(x^\star)}(a;\alpha)$. Hence, when the decision maker wants to make the decision based on prediction sets $C(x)$, $x\in \mathcal{X}$ that contain the actual label with high probability $1-\alpha$, the minimax optimal policy and the corresponding per-$x$ robust risk are 
\[
a^\star(x):=\pi^\star(C(x)),\quad R(C(x),\alpha):=\min_{a\in\mathcal{A}} L_{C(x)}(a;\alpha).
\]
Now we turn to the question of designing prediction sets that are suitable for decision-making. 
First,
motivated by the above characterization, 
we consider a hypothetical setting where the true distribution $P$ was known. 
We will later show how to apply this idea when $P$ is unknown.
As we already mentioned, the oracle optimal 
decision-theoretic characterization is phrased in terms of \emph{conditional} miscoverage, which is generally impossible \citep{vovk2012conditional, foygel2021limits}.
In practice, we will apply the method to a prediction set with a standard \emph{marginal} coverage guarantee $\mathbb{P}_P\left(Y\in C(X)\right) \ge 1-\alpha$; and we will argue empirically that the performance improves.

In the oracle set-design problem below (where $P$ is known), we allow the decision rule to depend on a
\emph{coverage assignment} $t:\mathcal X\to[0,1]$, interpreted as the conditional coverage level that is
certified at covariate value $x$. Concretely, we require that
$\Pr\{Y\in C(X)\mid X=x\}\ge t(x)$ for $P_X$-a.e.\ $x$.
By the tower property, this implies the marginal coverage bound
$\Pr\{Y\in C(X)\}\ge \mathbb E[t(X)]$.
Given such a certified level $t(x)$, the corresponding local miscoverage budget is $1-t(x)$ and the
robust risk at $x$ is $R(C(x),1-t(x))=\min_{a\in\mathcal A} L_{C(x)}(a;1-t(x))$.
We therefore consider the oracle set-design problem
\begin{equation}
\label{eq:optimal set problem}
 \begin{aligned}
    &\min_{C(\cdot),\,t(\cdot)} \quad \mathbb{E}\!\left[R\big(C(X),1-t(X)\big)\right]\\
    & \operatorname{s.t.} \ \Pr\{Y\in C(X)\mid X=x\}\ge t(x)\ \text{for $P_X$-a.e.\ }x,\\
    & \qquad \mathbb E[t(X)] \ge 1-\alpha.
\end{aligned}
\end{equation}
Next, we will explain how to solve this problem. 

\section{Oracle Optimality}
\label{ops}
Here,
we study the oracle optimal sets from \eqref{eq:optimal set problem}.
We start by following the approach from \cite{kiyani2025decision},
and then make the necessary changes to our setting.

Starting from this section, we drop the subscript $P$ on $\mathbb P_P$ and $\mathbb E_P$ for simplicity.
Our analysis begins with a \emph{pointwise} problem: fix a feature value $x\in\mathcal X$ and a target
certified conditional coverage level $t\in(0,1]$.
As in \cite{kiyani2025decision},
we will design a set $C(x)\subseteq\mathcal Y$
subject to the constraint $\Pr\{Y\in C(x)\mid X=x\}\ge t$, and evaluate decisions under the corresponding
miscoverage budget $1-t$ via the robust risk $R(C(x),1-t)=\min_{a\in\mathcal A}L_{C(x)}(a;1-t)$.

For $a\in\mathcal A$, define the maximal loss that action $a$ can incur as
$M(a):=\sup_{y\in\mathcal Y}\ell(a,y)<\infty$,
and define the conditional $t$-quantile of the loss at $(x,a)$ as
\[
Q_t^x(a) := \inf\big\{\theta\in\mathbb R:\ \mathbb{P}(\ell(a,Y)\le \theta\,\mid\,X=x)\ge t\big\}.
\]
 Also, consider the 
loss sublevel set for some $\theta$, namely
$S_\theta(a) := \{y\in\mathcal Y:\ \ell(a,y)\le \theta\}$. Intuitively, $S_{Q_t^x(a)}(a)$ 
are 
the lowest-loss 
labels under action $a$,
with $x$-conditional coverage of at least $t$. 
This is the natural candidate feasible set with small worst-case
in-set
loss $\sup_{y\in C(x)}\ell(a,y)$.

Moreover, under the constraint $\mathbb P(Y\in C(x)\mid X=x)\ge t$, an adversary in the definition of $L_{C(x)}(a;1-t)$ may place probability $t$ on a worst in-set label (with loss $Q_t^x(a)$ when $C(x)=S_{Q_t^x(a)}(a)$) and the remaining probability $1-t$ on a label achieving the maximal loss $M(a)$, leading to the pointwise objective $t\,Q_t^x(a)\ +\ (1-t)\,M(a)$. 
Then the corresponding pointwise optimal action is
\begin{equation}
    \label{eq:pointwise-optimal action}
    a(x, t) \in \arg \min _{a \in \mathcal{A}}\left\{t Q_t^x(a)+(1-t) M(a)\right\}.
\end{equation}
This is analogous to the optimal action $a$ from \cite{kiyani2025decision} for the quantile setting. 
The associated threshold, below which the loss values are included, is $\theta(x, t):=Q_t^x(a(x, t))$;
and the resulting optimal set $C(x,t)$ is
\begin{equation}
    \label{eq:pointwise-optimal set}
    S_{\theta(x,t)}\big(a(x,t)\big) = \big\{y\in\mathcal Y: \ell\big(a(x,t),y\big)\le \theta(x,t)\big\}.
\end{equation}
The following proposition summarizes this formally.
It is an analogue of Proposition 3.1 in \cite{kiyani2025decision}, with the difference that our formulation for expected risk necessitates explicitly accounting for the $(1-t)$ probability mass falling outside the set, yielding the additional $(1-t)M(a)$ term absent from their formulation.
\begin{proposition}
\label{prop:EL-pointwise}
If the minimum in \eqref{eq:pointwise-optimal action} exists,\footnote{This holds, e.g., if $\mathcal A$ is compact and $a\mapsto Q_t^x(a)$ and $a\mapsto M(a)$ are lower semicontinuous.} 
then, subject to $\operatorname{Pr}(Y \in C \mid X=x) \geq t$, the set $C(x,t)$ from \eqref{eq:pointwise-optimal set} achieves the smallest risk $R(C,1-t)$ with
\[
R(C(x,t),1-t)\ 
=\ t\,\theta(x,t)\ +\ (1-t)\,M\big(a(x,t)\big).
\]
Further, if the suprema defining $\ell^{\mathrm{in}}_{C(x,t)}(a(x,t))$ and $M(a(x,t))$ are attained,
then the worst--case conditional law achieving the inner supremum in $L_{C(x,t)}(a(x,t);1-t)$ can be taken to place probability $t$ on a point $y_{\mathrm{i}}\in\arg\max_{y\in C(x,t)}\ell(a(x,t),y)$ and probability $1-t$ on a point $y_{\mathrm{o}}\in\arg\max_{y\in\mathcal Y}\ell(a(x,t),y)$.

\end{proposition}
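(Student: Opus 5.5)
The plan is to rewrite the closed form of Lemma~\ref{lem:L-formula} so that it separates into an in-set term and a global term, and then compare any feasible set with the sublevel set $C(x,t)$. With miscoverage budget $\alpha=1-t$, the identity $u+\alpha(v-u)_+=(1-\alpha)u+\alpha\max\{u,v\}$ gives
\[
L_S(a;1-t)=t\,\ell^{\mathrm{in}}_S(a)+(1-t)\max\{\ell^{\mathrm{in}}_S(a),\ell^{\mathrm{out}}_S(a)\}.
\]
For every nonempty $S\subseteq\mathcal Y$ the maximum $\max\{\ell^{\mathrm{in}}_S(a),\ell^{\mathrm{out}}_S(a)\}$ equals $\sup_{y\in\mathcal Y}\ell(a,y)=M(a)$. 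This also holds when $S=\mathcal Y$, by the convention $\ell^{\mathrm{out}}_{\mathcal Y}=\ell^{\mathrm{in}}_{\mathcal Y}$. Hence
\[
L_S(a;1-t)=t\,\ell^{\mathrm{in}}_S(a)+(1-t)M(a),
\]
and only the in-set worst loss depends on $S$. This reduction is the heart of the argument.

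\emph{Lower bound.} Let $C$ be any set with $\Pr(Y\in C\mid X=x)\ge t$, and fix $a\in\mathcal A$. On the event $\{Y\in C\}$ we have $\ell(a,Y)\le \ell^{\mathrm{in}}_C(a)$. Therefore $\Pr(\ell(a,Y)\le \ell^{\mathrm{in}}_C(a)\mid X=x)\ge t$, and by the definition of the quantile, $\ell^{\mathrm{in}}_C(a)\ge Q^x_t(a)$. It follows that
\[
L_C(a;1-t)\ge tQ^x_t(a)+(1-t)M(a)\ge tQ^x_t(a(x,t))+(1-t)M(a(x,t)).
\]
The last inequality is the definition \eqref{eq:pointwise-optimal action}. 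Taking the minimum over $a$ gives $R(C,1-t)\ge t\theta(x,t)+(1-t)M(a(x,t))$ for every feasible $C$.

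\emph{Achievability.} Write $a^\ast=a(x,t)$ and $\theta=\theta(x,t)$. First I check that $C(x,t)=S_\theta(a^\ast)$ is feasible. The conditional CDF $\theta'\mapsto \Pr(\ell(a^\ast,Y)\le\theta'\mid X=x)$ is right-continuous, so the infimum defining $Q^x_t(a^\ast)$ is attained. Hence $\Pr(Y\in C(x,t)\mid X=x)\ge t$; this is also what makes $C(x,t)$ nonempty when $t>0$. By construction $\ell^{\mathrm{in}}_{C(x,t)}(a^\ast)\le\theta$. The reduction then gives
\[
R(C(x,t),1-t)\le L_{C(x,t)}(a^\ast;1-t)\le t\theta+(1-t)M(a^\ast).
\]
Combined with the lower bound, this yields equality and optimality of $C(x,t)$ among all feasible sets.

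\emph{Worst-case law.} Assume $y_{\mathrm i}$ and $y_{\mathrm o}$ are attained maximizers as in the statement, and let $Q=t\delta_{y_{\mathrm i}}+(1-t)\delta_{y_{\mathrm o}}$. Then $Q(C(x,t))\ge t$, since $y_{\mathrm i}\in C(x,t)$; this holds even if $y_{\mathrm o}$ also lies in the set. Its expected loss is $t\,\ell^{\mathrm{in}}_{C(x,t)}(a^\ast)+(1-t)M(a^\ast)$, which equals $L_{C(x,t)}(a^\ast;1-t)$ by the reduction, so $Q$ attains the inner supremum.

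The step I expect to need the most care is the lower bound $\ell^{\mathrm{in}}_C(a)\ge Q^x_t(a)$ together with the attainment of the quantile. Both rest on handling the quantile correctly through right-continuity of the CDF, and on noting that only the inequality $\ell^{\mathrm{in}}_{C(x,t)}(a^\ast)\le\theta$ is needed, not equality. Once the identity $L_S(a;1-t)=t\ell^{\mathrm{in}}_S(a)+(1-t)M(a)$ is in hand, everything else is direct.
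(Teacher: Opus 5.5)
Your proposal is correct and follows essentially the paper's argument: the quantile lower bound $\ell^{\mathrm{in}}_C(a)\ge Q^x_t(a)$ from $C\subseteq S_{\ell^{\mathrm{in}}_C(a)}(a)$, Lemma~\ref{lem:L-formula} with $\alpha=1-t$, achievability by the sublevel set $C(x,t)$, and the two-point worst-case law. The only differences are presentational. Your identity $L_S(a;1-t)=t\,\ell^{\mathrm{in}}_S(a)+(1-t)M(a)$ for nonempty $S$ replaces the paper's case split on $s<M(a)$ versus $s=M(a)$, and you make explicit the right-continuity argument for attainment of $Q^x_t$, which the paper leaves implicit.
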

\begin{remark}[Edge cases]
At $t=0$ the constraint is vacuous, i.e., every measurable $C\subseteq\mathcal Y$ is feasible, and the value reduces to $\min_{a\in\mathcal A} M(a)$, independent of $C$. For later use, when $t=0$ we fix any minimizer $a(x,0)\in\arg\min_{a\in\mathcal A} M(a)$ and set
$\theta(x,0):=M(a(x,0))$ and $C(x,0):=\mathcal Y$.
\end{remark}

Proposition \ref{prop:EL-pointwise} allows us to reparametrize the problem \eqref{eq:optimal set problem} in terms of the pointwise coverage $t:x\mapsto t(x)$, as in \cite{kiyani2025decision}.
Formally, the problem \eqref{eq:optimal set problem}  has the following equivalent reparametrization:
\begin{equation}
\label{eq:pop-problem}
\operatorname{VAL}(\alpha) := \inf_{\substack{t:\mathcal X\to[0,1]\ \text{measurable}\\ \mathbb E[t(X)]\ge 1-\alpha}}
\ \mathbb E\!\left[V_X\big(t(X)\big)\right].
\end{equation}
where we define
\begin{equation}
    \label{eq:v_x(t)}
    V_x(t) := \min_{a\in\mathcal A}\ \Big\{ t\,Q_t^x(a)\ +\ (1-t)\,M(a)\,\Big\},\quad t\in(0,1],
\end{equation}
and set $V_x(0):=\min_{a\in\mathcal A} M(a)$ as in Remark~\ref{prop:EL-pointwise}. Letting $t^\star$
be the optimum, 
the optimal actions are
$a^\star(x)=a\left(x, t^\star(x)\right)$,
and the optimal prediction set is:
\begin{equation}
\label{eq:optimal set}
C^\star(x) = C\big(x,t^\star(x)\big)
=\ \Big\{y\in\mathcal Y:\ \ell\big(a(x,t^\star(x)),y\big)\le \theta\big(x,t^\star(x)\big)\Big\}.
\end{equation}
To solve this problem, 
we adopt a duality-based approach in the spirit of \citet{kiyani2025decision}. However, their proof strategy does not transfer directly to our setting. They  
reduce the problem to an infinite-dimensional linear program and leverage LP duality theory, which does not work for our problem.
Instead, we introduce a different technical approach: we work in the normal-integrand framework and invoke Fenchel-Rockafellar duality through a randomized-kernel relaxation, followed by 
derandomization to recover a deterministic optimizer. 

For $\beta \ge0$, we define the dual function 
\[
\phi(\beta) := \beta(1-\alpha)\ +\ \mathbb E\!\left[\inf_{u\in[0,1]}\bigl\{V_X(u)-\beta u\bigr\}\right].
\]
and define the (set-valued) argmin correspondence
    \(
    \Gamma_\beta(x):=\arg\min_{u\in[0,1]}\bigl\{V_x(u)-\beta u\bigr\},
    \)
    and its extremal selectors
    \[
    g^+(x,\beta):=\max\Gamma_\beta(x),\qquad g^-(x,\beta):=\min\Gamma_\beta(x).
    \]
The following theorem characterizes the optimal $t^*$ and hence the optimal action and prediction set.

\begin{theorem}
\label{thm:pop-opt-general}
    Assume $P_X$ is non-atomic, that $(x,t)\mapsto V_x(t)$ in \eqref{eq:v_x(t)} is a normal integrand\footnote{See Appendix~\ref{app:proofs} for the definition of normal integrands and its equivalent characterization.}, and that, for each $x$, the minimum in the definition of $V_x(t)$ exists for every $t\in[0,1]$.
    Then there exists $\beta^*\ge 0$ and a measurable $t^\star:\mathcal X\to[0,1]$ such that
    \[
t^\star(x)\in\Gamma_{\beta^\star}(x)\quad \text{$P_X$-a.e.\ $x$},
    \]
    and $t^*$ solves the population problem \eqref{eq:pop-problem}. Substituting this optimal coverage assignment $t^\star(x)$ into \eqref{eq:optimal set} yields the optimal prediction sets, with actions $a^\star(x)=a\big(x,t^\star(x)\big)$ as in \eqref{eq:pointwise-optimal action}.  Moreover, one can always choose $t^*$ of the form
    \begin{equation}
        \label{eq:general t*}
            t^*(x)=
    \begin{cases} 
    g^-(x,\beta^\star)+\bigl(g^+(x,\beta^\star)-g^-(x,\beta^\star)\bigr)\mathbf 1_A(x), & \beta^*>0,\\
g^+(x,0), & \beta^*=0.
\end{cases}
    \end{equation}
    for some measurable $A\subseteq\mathcal{X}$. Any maximizer $\beta^{\star}>0$ of  the dual function $\phi(\beta)$ satisfies the interval condition
    \begin{equation}
        \label{eq:interval condition}
        \mathbb{E}\left[g^{-}\left(X, \beta^{\star}\right)\right] \leq 1-\alpha \leq \mathbb{E}\left[g^{+}\left(X, \beta^{\star}\right)\right].
    \end{equation}
    If $\beta^{\star}=0$, only the right inequality is required, i.e.\ $\mathbb{E}[g^{+}(X,0)]\ge 1-\alpha$.
\end{theorem}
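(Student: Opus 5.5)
We argue by Lagrangian duality for the single scalar constraint $\mathbb E[t(X)]\ge 1-\alpha$ in \eqref{eq:pop-problem}. \emph{Weak duality} is immediate. For any feasible $t$ and any $\beta\ge0$,
\[
\mathbb E[V_X(t(X))]\ \ge\ \mathbb E[V_X(t(X))-\beta t(X)]+\beta(1-\alpha)\ \ge\ \phi(\beta),
\]
so $\operatorname{VAL}(\alpha)\ge\sup_{\beta\ge0}\phi(\beta)$.

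For the reverse inequality we relax to randomized coverage assignments. These are Markov kernels $\kappa(du\mid x)$ on $[0,1]$, with objective $\int\!\!\int V_x(u)\,\kappa(du\mid x)P_X(dx)$ and constraint $\int\!\!\int u\,\kappa(du\mid x)P_X(dx)\ge 1-\alpha$. The relaxed problem is linear in $\kappa$ and has one linear constraint. Slater's condition holds because $t\equiv1$ gives $\mathbb E[t]=1>1-\alpha$ when $\alpha>0$; the case $\alpha=0$ forces $t\equiv1$ and is trivial. Hence Fenchel--Rockafellar duality (equivalently, Lagrangian duality for a convex program with one constraint) yields zero duality gap and a dual maximizer $\beta^\star\ge0$. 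The dual of the relaxed problem is $\sup_\beta\{\beta(1-\alpha)+\inf_\kappa\int\!\!\int (V_x(u)-\beta u)\kappa\}$. Because $V$ is a normal integrand, Rockafellar's interchange theorem for normal integrands gives $\inf_\kappa\int\!\!\int=\mathbb E[\inf_{u}(V_X(u)-\beta u)]$, so this dual equals $\phi(\beta)$. The normal-integrand assumption also makes $\Gamma_\beta$ a closed-valued measurable correspondence with nonempty values, since $[0,1]$ is compact and $V_x$ is lsc. Consequently $g^\pm(\cdot,\beta)$ are measurable by the measurable selection theorem.

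Next come the optimality conditions at $\beta^\star$. The function $\phi$ is concave. Its superdifferential at $\beta$ is $[(1-\alpha)-\mathbb E[g^+(X,\beta)],\,(1-\alpha)-\mathbb E[g^-(X,\beta)]]$, by Danskin's formula: $u\mapsto V_x(u)-\beta u$ has one-sided $\beta$-derivatives $-\max\Gamma_\beta$ and $-\min\Gamma_\beta$, and we pass expectations through by monotone/dominated convergence since $|u|\le1$. The condition $0\in\partial\phi(\beta^\star)$ for $\beta^\star>0$ is exactly the interval condition \eqref{eq:interval condition}. For $\beta^\star=0$ we need only the one-sided condition $\phi'(0^+)\le0$, i.e.\ $\mathbb E[g^+(X,0)]\ge1-\alpha$. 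Existence of the maximizer $\beta^\star$ also needs an argument: $\phi(\beta)\to-\infty$ as $\beta\to\infty$ whenever $1-\alpha<1$, because $\inf_u\{V_x(u)-\beta u\}\le V_x(1)-\beta$, and $\phi$ is upper semicontinuous.

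\emph{Derandomization} is the step we expect to be the main obstacle. When $\beta^\star=0$, take $t^\star=g^+(\cdot,0)$. It is feasible and attains $\phi(0)$ pointwise, hence is optimal. When $\beta^\star>0$, the function $h(A):=\mathbb E[g^-+(g^+-g^-)\mathbf 1_A]$ ranges over all of $[\mathbb E g^-,\mathbb E g^+]$ as $A$ varies over measurable sets. This follows from Lyapunov's convexity theorem, or more simply by sweeping $A_s$ along a measurable nested family; non-atomicity of $P_X$ makes $s\mapsto h(A_s)$ continuous. So we can pick $A$ with $\mathbb E[t^\star]=1-\alpha$ exactly. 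Then $t^\star(x)\in\Gamma_{\beta^\star}(x)$ pointwise, so
\[
\mathbb E[V_X(t^\star)]=\mathbb E[\inf_u(V_X(u)-\beta^\star u)]+\beta^\star(1-\alpha)=\phi(\beta^\star)\le \operatorname{VAL}(\alpha).
\]
This proves optimality and, incidentally, strong duality without needing the relaxation's zero gap beyond the existence of $\beta^\star$ satisfying \eqref{eq:interval condition}. Finally, we apply Proposition~\ref{prop:EL-pointwise} pointwise at $t^\star(x)$, together with the equivalence of \eqref{eq:optimal set problem} and \eqref{eq:pop-problem}, to obtain the optimal sets \eqref{eq:optimal set} and actions $a^\star(x)=a(x,t^\star(x))$. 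This last step requires a measurable selection of $a(x,t^\star(x))$, which follows from the normal integrand assumption.
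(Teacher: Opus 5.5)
Your proof is correct, but it closes the argument differently from the paper. The paper first proves strong duality as a separate lemma. It relaxes to randomized kernels $x\mapsto\mu_x$, applies Fenchel--Rockafellar duality for integral functionals, and then uses Dvoretzky--Wald--Wolfowitz purification (with non-atomic $P_X$) to show that randomization does not lower the value. It then constructs $t^\star$ exactly as you do:
\begin{itemize}
\item $\beta^\star$ exists because $\phi(\beta)\le\mathbb E[V_X(1)]-\alpha\beta$ and $\phi$ is continuous;
\item Danskin's formula gives the one-sided derivatives and hence the interval condition;
\item Sierpi\'nski's theorem gives the set $A$.
\end{itemize}
Only at the end does it cite the no-gap lemma to conclude $\mathbb E[V_X(t^\star(X))]=\phi(\beta^\star)=\operatorname{VAL}(\alpha)$.

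You finish with weak duality alone. Since $t^\star$ is feasible, $\operatorname{VAL}(\alpha)\le\mathbb E[V_X(t^\star(X))]=\phi(\beta^\star)\le\operatorname{VAL}(\alpha)$. This gives optimality of $t^\star$, and strong duality (and $\mathrm{VAL}=\mathrm{VAL}_{\mathrm{rel}}$) come out as by-products. This is sound because the interval condition comes only from concavity of $\phi$ and $0\in\partial\phi(\beta^\star)$, not from any zero-gap statement.

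Your route therefore shows that the paper's no-gap lemma is redundant, together with the purification theorem and the appeal to Rockafellar--Wets Thm.~11.39. Non-atomicity of $P_X$ is needed only in the Sierpi\'nski/Lyapunov step. Your own relaxation paragraph is likewise never used downstream and can be deleted. What the paper's route buys is a strong-duality statement that stands independently of the first-order analysis. Yours is shorter and uses only elementary tools.

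Two minor points.

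\emph{The case $\alpha=0$.} Your aside that this case is trivial does not hold for the theorem as stated. The choice $t\equiv 1$ does solve \eqref{eq:pop-problem}, but a finite $\beta^\star$ with $1\in\Gamma_{\beta^\star}(x)$ a.e.\ need not exist. It fails whenever $(V_x(1)-V_x(u))/(1-u)$ is unbounded as $u\uparrow 1$. For example, take one action, $\ell(a,y)=y$ on $\mathcal Y=[0,1]$, and $\Pr(Y\le\theta\mid X=x)=1-(1-\theta)^2$; then $V_x(u)=1-u\sqrt{1-u}$. So $\alpha\in(0,1)$ must be assumed, as the paper does implicitly.

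\emph{Measurability of the action.} Measurability of $x\mapsto a(x,t^\star(x))$ does not follow from normality of $V$ alone. It needs something like normality in $(x,a)$ of $t^\star(x)\,Q^x_{t^\star(x)}(a)+(1-t^\star(x))M(a)$. The paper leaves this point implicit as well.
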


\section{Risk--optimal conformal prediction}
\label{sec:ROCP-EL}
Theorem~\ref{thm:pop-opt-general} characterizes an oracle optimal prediction set, 
assuming the true distribution is known. 
This section constructs a conformal prediction set that emulates this oracle based on data.
 While the approximation guarantee to the optimum is, in general, challenging to establish,
 the conformal prediction set 
provides a finite-sample guarantee--namely, we have marginal coverage $1-\alpha$ under exchangeability.

\subsection{Constructing estimators}
\label{subsec:oracle-plug-in}

Given calibration data $\{(X_i,Y_i)\}_{i=1}^n$ 
and a 
probabilistic predictor
$f: \mathcal{X}\mapsto \Delta_{\mathcal{Y}}$, 
where we consider $f_x(y)$ to estimate the true conditional distribution $P(y|x)$. 
We adopt the approach of estimating the population-level quantities with plug-in estimators and then conformalizing the final result to guarantee coverage. 
Such approaches have been widely used in conformal prediction, 
see e.g., \cite{Sadinle2019,romano2020classification,kiyani2025decision, wang2025singleton}, etc.

Following the oracle construction from Section \ref{ops},
as in \cite{Sadinle2019,romano2020classification,kiyani2025decision, wang2025singleton},
we first estimate $Q_t^x(a)$ using a standard plug-in principle leveraging $f_x$:
$
\hat{Q}_t^x(a) := \inf\big\{\theta\in\mathbb R: \mathbb{P}_{Y\sim f_x}(\ell(a,Y)\le \theta)\ge t\big\},
$
then estimate
$
\widehat a(x,t)\in\arg\min_{a\in\mathcal A} \Big\{t\widehat Q_t^x(a)+(1-t) M(a)\Big\},$
and set $\widehat\theta(x,t):=\widehat Q_t^x\big(\widehat a(x,t)\big)$. Finally, define the plug-in estimate of $V_x$:
$$\widehat V_x(t):=\min_{a\in\mathcal A}\Big\{t\,\widehat Q_t^x(a)+(1-t)\,M(a)\Big\}.$$
and the empirical dual selector:
$\widehat g(x,\beta)\in\arg\min_{t\in[0,1]}\ \big\{\widehat V_x(t)-\beta t\big\}.$
Following 
Theorem \ref{thm:pop-opt-general} and as in \cite{kiyani2025decision}, 
we define the \emph{$\beta$--parametrized}  quantities:
\(
\widehat\theta(x,\beta) := \widehat\theta\!\big(x,\widehat g(x,\beta)\big)\),
\(\widehat a(x,\beta) := \widehat a\!\big(x,\widehat g(x,\beta)\big)\).
Following \eqref{eq:optimal set}, the plug-in estimate of the optimal set is  $\widehat C(x;\beta):=\widehat C_0\big(x;\widehat g(x,\beta)\big)$, where
$\widehat C_0(x;t):=\Big\{y\in\mathcal Y:\ \ell\big(\widehat a(x,t),y\big)\le \widehat\theta(x,t)\Big\}$.

\subsection{Risk-optimal conformal prediction}\label{subsec:ROCP-EL}
We present a method (Algorithm \ref{alg:ROCP-EL}) that we call
risk-optimal conformal prediction (ROCP), 
in the spirit of
 group conditional conformal prediction \citep{gibbs2025conformal} and 
RAC \citep{kiyani2025decision}.
The algorithm only uses a calibration set and the  maps $(\widehat\theta,\widehat a,\widehat g)$; 
it makes no assumptions about how $f$ was trained.
\begin{algorithm}[t!]
\caption{Risk--optimal conformal prediction (ROCP)}\label{alg:ROCP-EL}
\begin{algorithmic}[1]
\Require Calibration samples $\{(X_i,Y_i)\}_{i=1}^n$, test covariate $X_{\rm test}$.
\For{each candidate label $y \in \mathcal{Y}$}
\Statex {\small Solve $\displaystyle
\widehat\beta_y=\underset{\beta\ge 0}{\arg\min}\ \beta \quad \text{subject to:}\quad
\frac{1}{n+1}\Big\{\sum_{i=1}^n \mathbf{1}\{Y_i\in \widehat C(X_i;\beta)\}+\mathbf{1}\{y\in \widehat C(X_{\rm test};\beta)\}\Big\}\ge 1-\alpha.$}

\EndFor
\Ensure Prediction set $C_{\rm ROCP}(X_{\rm test}) := \{y\in\mathcal Y : y \in \widehat C(X_{\rm test}; \widehat\beta_y)\,\}$ and robust action $\widehat a_{\rm ROCP}(X_{\rm test})\in\arg\min_{a\in\mathcal A} L_{\,C_{\rm ROCP}(X_{\rm test})}(a;\alpha)$ .
\end{algorithmic}

\end{algorithm}

As in \cite{gibbs2025conformal,kiyani2025decision}, it follows that 
if the test datapoint
$(X_{\rm test},Y_{\rm test})$ and the 
calibration data $\{(X_i,Y_i)\}_{i=1}^n$ are exchangeable, the ROCP set satisfies the standard marginal coverage guarantee \citep{vovk1999machine,vovk2005algorithmic}
\(
\Pr\!\big\{Y_{\rm test}\in C_{\rm ROCP}(X_{\rm test})\,\big\}\ \ge\ 1-\alpha,
\)

{\bf Remark.}
If the  estimator $\widehat Q$ is consistent and the argmin in $a$ and in $t$ is stable, then 
we expect 
$(\widehat\theta,\widehat a,\widehat g)$ to approximate $(\theta,a,g)$ well.
In this case, we expect ROCP to approximate the optimal risk performance characterized by Theorem~\ref{thm:pop-opt-general}, while maintaining finite–sample coverage.

\section{Experiments}
Given any set $S\subseteq\mathcal Y$, define the robust decision rule
\[
a_{\rm ROCP}(S)\in\arg\min_{a\in\mathcal A} L_S(a;\alpha).
\]
In this section, we report experiments comparing our \texttt{ROCP} method with several baselines: 

{\bf Risk-Averse Calibration.} \citet{kiyani2025decision}
 study a setting with a utility $u$, which is equivalent to the negative loss, $u=-\ell$.
They propose the max--min decision rule
$$a_{\mathrm{RA}}(C(x)) \in \arg\max_{a\in\mathcal A}\ \min_{y\in C(x)} u(a,y),$$
which maps a prediction set $C(x)$ to an action by maximizing the worst-case utility over labels in the set. 
In our language, this corresponds to the rule $\arg\min_{a}\ell_{C(x)}^{\mathrm{in }}(a)$, which corresponds to the solution $\pi^\star$  from \eqref{pi-star} with $\alpha=0$.
They show that this rule is minimax-optimal for maximizing an expected quantile of the utility, and they derive prediction sets tailored to the max--min rule via their Algorithm~1. In our experiments, we evaluate both their full procedure \texttt{RAC} (Algorithm~1 paired with the max--min decision rule) and the max--min decision rule applied to prediction sets produced by other methods.

{\bf Calibration + Best-Response.}
We first calibrate the predictive model using \emph{decision calibration} on the calibration set \citep{zhao2021calibrating}. We then take the \emph{best-response} action under the (approximately calibrated) predictive distribution, $\operatorname{best-response}(x)  \in\ \arg\min_{a\in\mathcal A}\ \mathbb E_{Y\sim f_x}\big[\ell(a,Y)\big]$.
This baseline treats the model’s predictive probabilities as reliable and commits to a single action without accounting for set-valued uncertainty. We include it to illustrate the consequences of fully trusting the model: while it can achieve strong average utility when the predictive distribution is accurate, it may also incur frequent critical mistakes compared to our method.

{\bf Conformal Prediction.}
As decision-agnostic baselines, we construct 
prediction sets 
with marginal $(1-\alpha)$-coverage
via split conformal prediction using three scoring rules: Least Ambiguous Sets (\texttt{LAS}) \citep{Sadinle2019}, Adaptive Prediction Sets (\texttt{APS}) \citep{romano2020classification}, and \texttt{SOCOP} \citep{wang2025singleton}. 
Given some $x$ and for each conformal set $C(x)$, we then instantiate downstream decisions using both our action rule $a_{\text{ROCP}}(C(x))$ and the risk-averse max--min rule $a_{\text{RA}}(C(x))$ for comparison.

We evaluate the following metrics on the test set $\{(x_i,y_i)\}_{i=1}^{n_{\rm test}}$, where each method outputs a prediction set $C(x_i)$ and a corresponding action $a_i$. 
\begin{itemize}
    \item[(a)] {\bf Average Realized Worst-Case Risk}: the mean worst-case robust risk
$\frac{1}{n_{\rm test}}\sum_{i=1}^{n_{\rm test}} L_{C(x_i)}(a_i;\alpha),$
where $L_S(a;\alpha)$ is defined in Lemma~\ref{lem:L-formula} and  $a_i$ is the chosen action.
\item[(b)] {\bf Average Realized Loss}:
the test-time mean realized loss
$
\frac{1}{n_{\rm test}}\sum_{i=1}^{n_{\rm test}} \ell(a_i,y_i),$
where $y_i$ is the true label of $x_i$ and $a_i$ is the chosen action.
\item[(c)] {\bf Average Miscoverage}:
the empirical miscoverage rate
$\frac{1}{n_{\rm test}}\sum_{i=1}^{n_{\rm test}} \mathbf{1}\{y_i\notin C(x_i)\}.$
\item[(d)] {\bf Critical Mistake Rates}:
For each critical label $y_c$, we report the fraction of test
data with true label $y_c$ for which the chosen action attains the worst possible loss for that label:
$
\sum_{i:\ y_i=y_c}\mathbf{1}\Big\{a_i \in \arg\max_{a\in\mathcal A}\ell(a,y_c)\Big\}/{|\{i:\ y_i=y_c\}|}.$
\end{itemize}

\subsection{Medical Diagnosis}

\begin{figure}
\centering
\begin{subfigure}[t]{0.5\linewidth}
  \centering
  \includegraphics[width=\linewidth]{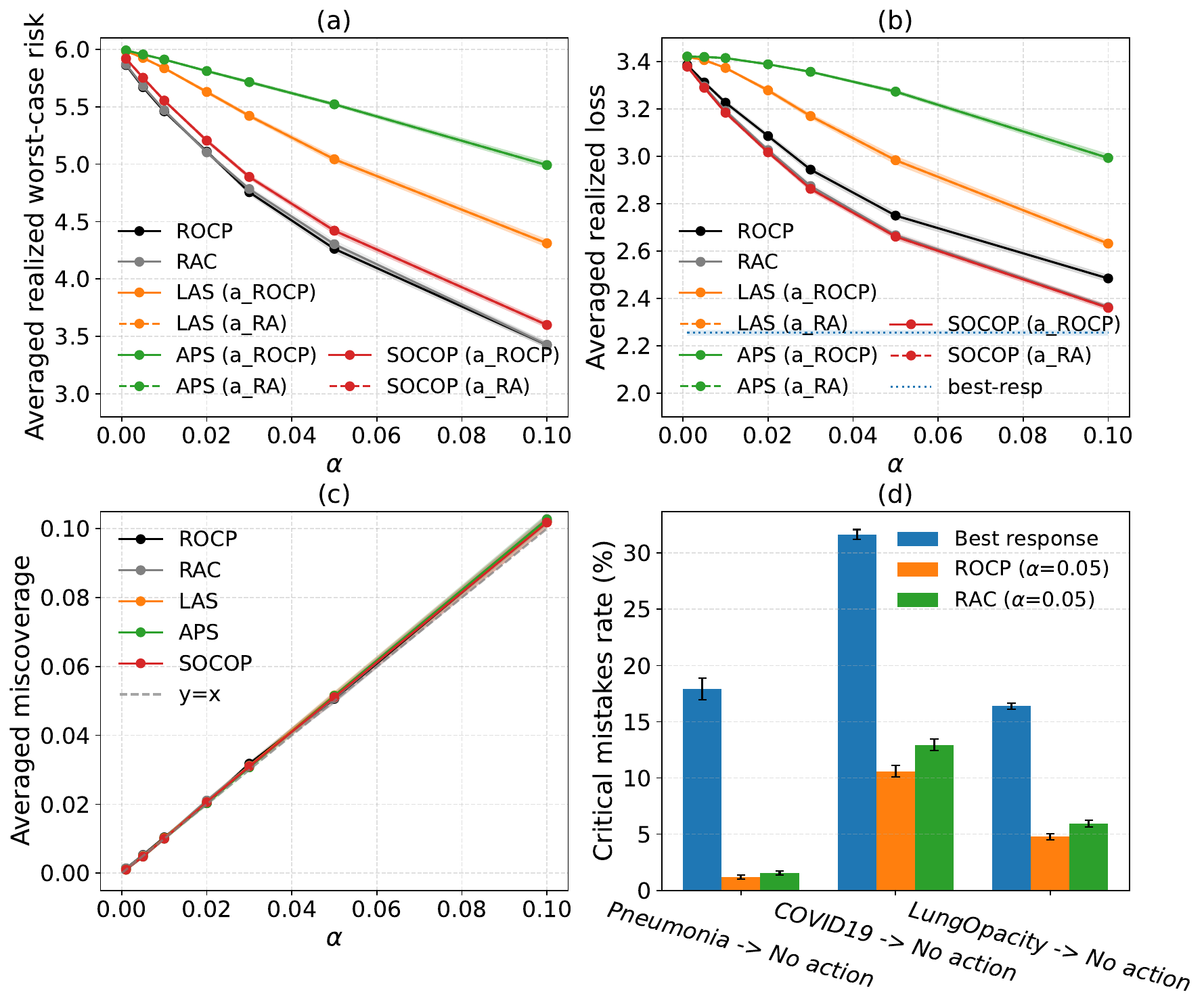}
  \caption{Baseline loss matrix $\Lambda_0$}
  \label{fig:covid-lambda0}
\end{subfigure}\hspace{0em}%
\begin{subfigure}[t]{0.5\linewidth}
  \centering
  \includegraphics[width=\linewidth]{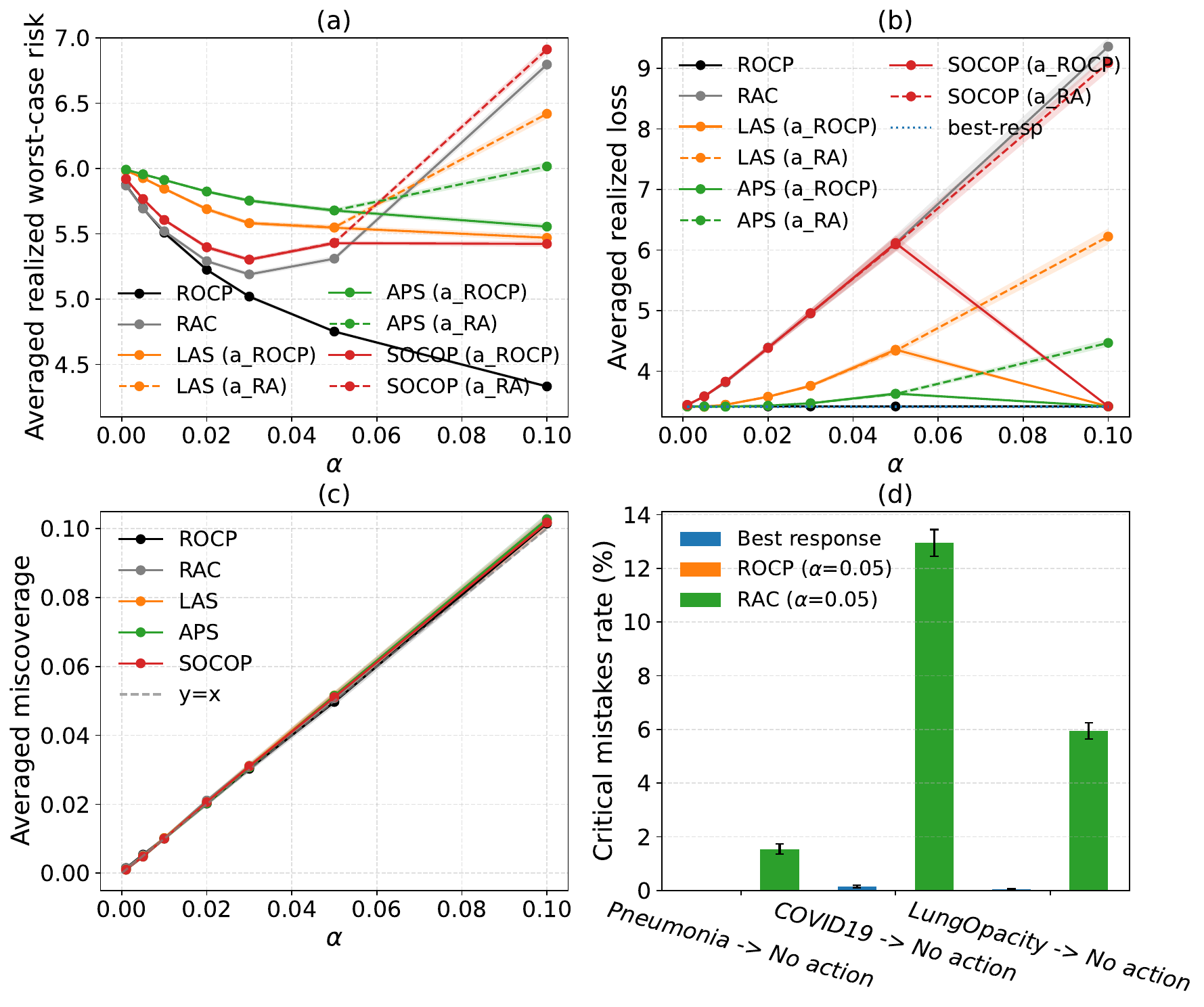}
  \caption{Loss matrix with $10\times$ more severe-mismatch penalties $\Lambda_1$}
   \label{fig:covid-lambda1}
\end{subfigure}
\caption{Medical diagnosis experiments. Results under two treatment-loss specifications: the baseline loss matrix from \citet{kiyani2025decision} (left) and our safety-critical variant. 
Each panel reports, as a function of miscoverage level $\alpha$: (a) average realized worst-case risk certificate;  (b) average realized loss; (c) empirical miscoverage; (d) critical mistake rate for critical labels, defined as the fraction of test points with true label $y_c$ for which the chosen action attains $\arg\max_{a\in\mathcal A}\ell(a,y_c)$. All results are averaged over 20 random train/calibration/test splits; error bars show $\pm 1$ standard error.}
\label{fig:covid-eval}

\end{figure}

First, in order to ensure a sufficiently detailed comparison with the RAC method of \citet{kiyani2025decision}, we start by comparing our method on a medical diagnosis example replicated from their paper. 
The data come from the COVID-19 Radiography Database \citep{chowdhury2020can,rahman2021exploring}, which contains chest X-ray images labeled into four categories: Normal, Pneumonia, COVID-19, and Lung Opacity. We randomly partition the dataset into training (70\%), calibration (10\%), and test (20\%) splits. For the predictive model, we use an \texttt{Inception-v3} architecture \citep{szegedy2015going,  szegedy2016rethinking} initialized with ImageNet-pretrained weights and fine-tune it on the training split.

{\bf Loss matrix designs.}
We model the downstream treatment objective using a loss matrix\footnote{RAC is formulated in terms of a utility matrix. We convert losses to utilities via $u(a,y)=M-\ell(a,y)$, where $M=\max_{a\in\mathcal A,\,y\in\mathcal Y}\ell(a,y)$.} $\Lambda\in\mathbb R_+^{|\mathcal Y|\times|\mathcal A|}$, 
where $\ell(a,y)=\Lambda_{y,a}$.
Here the labels are $y\in\{$ 
Normal,  Pneumonia,  COVID-19,  Lung Opacity$\}$ and the available actions are $a\in\{$
No Action,  Antibiotics,  Quarantine,  Additional Testing
$\}$.
Following \citet{kiyani2025decision}, we use the
baseline matrix $\Lambda_0$, transformed by $\ell(a,y)=\max_{a\in\mathcal{A},y\in\mathcal{Y}}u(a,y) - u(a,y)$ with their utility function $u$.
To probe higher-stakes regimes in which incorrect interventions are substantially more costly, we additionally consider 
a variant $\Lambda_1$ where critical mistakes have higher losses:
$$
\Lambda_0=
\begin{pmatrix}
0 & 8 & 8 & 6\\
10 & 0 & 7 & 3\\
10 & 7 & 0 & 2\\
9 & 6 & 6 & 0
\end{pmatrix},
\quad
\Lambda_1=
\begin{pmatrix}
0 & 8 & 8 & 6\\
100 & 0 & 70 & 3\\
100 & 70 & 0 & 2\\
90 & 60 & 60 & 0
\end{pmatrix}.
$$

This variant multiplies the loss of severe mismatches (e.g., choosing \textit{No Action} for a pathological label) by a factor of 10, making it especially important to account for potentially out-of-set labels.
This is the regime where ROCP’s out-of-set robustness can differ most from in-set max-min policies.

We vary the miscoverage level $\alpha$ to study its effect on performance. 
In \Cref{fig:covid-lambda0}, which uses the baseline loss matrix, \texttt{ROCP} attains worst-case risk that are close to \texttt{RAC} across all $\alpha$. While \texttt{ROCP} has slightly higher average realized loss than \texttt{RAC}, it consistently yields lower critical mistake rates. 
The best-response rule achieves the lowest average realized loss, but it 
has a much higher critical mistake rates. 

Finally, for decision-agnostic conformal set constructions (\texttt{LAS}/\texttt{APS}/\texttt{SOCOP}), applying $a_{\text{ROCP}}$ or $a_{\text{RA}}$ yields nearly identical performance, indicating that the dominant effect there comes from the set itself rather than the downstream policy. This close agreement is expected: under the baseline matrix there is no extreme penalty for severe mismatches, so the additional out-of-set term in the robust objective $L_{C(x)}(a;\alpha)$ (which is down-weighted by $\alpha$) has limited influence, and the resulting decision rule is close to the in-set max--min behavior of \texttt{RAC}.

The picture changes under the 10$\times$ severe-mismatch penalty matrix in \Cref{fig:covid-lambda1}. In this higher-stakes regime, \texttt{RAC} becomes brittle as $\alpha$ grows: both its worst-case certificate and realized loss deteriorate sharply at larger $\alpha$, whereas \texttt{ROCP} remains stable and achieves substantially lower worst-case risk and realized loss, nearly matching the best-response baseline on realized loss. Moreover, \texttt{ROCP} almost eliminates critical mistakes for all critical labels while \texttt{RAC} remains similar to the baseline-matrix setting, highlighting the benefit of explicitly accounting for the $\alpha$ fraction of out-of-set mass in the objective. This effect is also reflected in the conformal baselines: for $\alpha\ge 0.05$, applying $a_{\text{ROCP}}$ to \texttt{LAS}/\texttt{APS}/\texttt{SOCOP} sets consistently yields lower worst-case risk certificates and realized losses than applying the in-set max--min rule $a_{\text{RA}}$ to the same sets.

\subsection{Decision-making in an "autonomous driving"-like setting}
We consider a toy autonomous driving decision experiment built from the BDD100K driving dataset \citep{yu2020bdd100k}.
Our goal is to stress-test decision-making under set-valued uncertainty using a \emph{black-box} probabilistic model $f_x$
constructed from a pretrained  \texttt{YOLO11} detector \citep{yolo11_ultralytics}. 
Each image $x$ is mapped to a hazard label $Y=(Y_a,Y_\ell,Y_r)\in\{0,1\}^3$, where $Y_a$ indicates an occupied \emph{ahead-close} region (person or vehicle), and $Y_\ell$/$Y_r$ indicate a \emph{left-close}/\emph{right-close} nearby vehicle. The precise region-of-interest definitions, label construction from BDD annotations, the construction of $f_x$, and the definitions of the actions and loss
are deferred to \Cref{app:bdd-toy}.


\begin{figure}[t]
    \centering
    \includegraphics[width=0.8\linewidth]{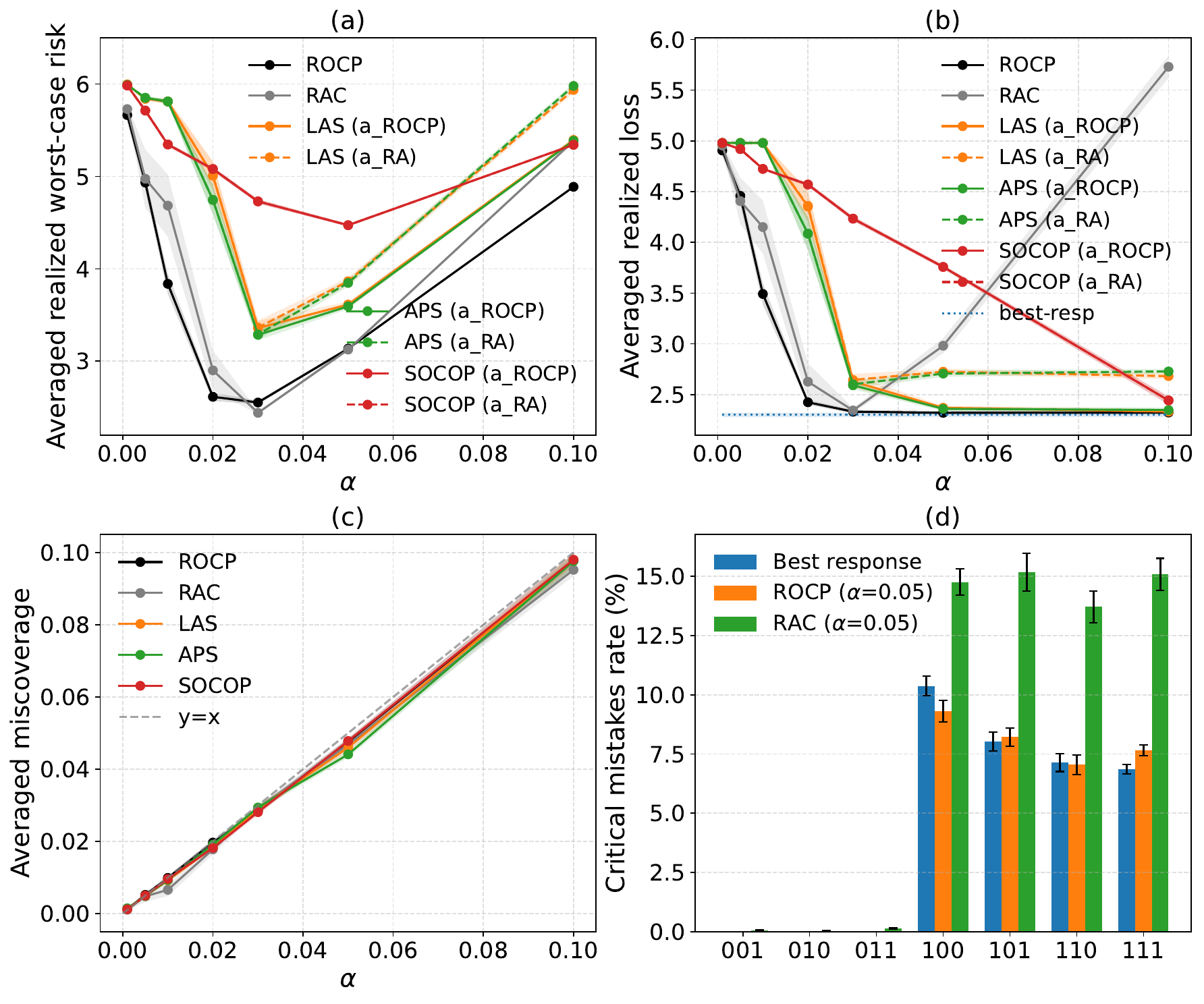}
    \caption{Toy autonomous driving experiment. (a) average realized worst-case risk certificate; (b) average realized loss; (c) empirical miscoverage; and (d) a critical mistake is defined as selecting an action that incurs the collision penalty (i.e., loss at least $M$) in that state.
    In (d), the x-axis labels \texttt{001}, \texttt{010}, $\ldots$, \texttt{111} denote the 3-bit hazard state $y=(y_a,y_\ell,y_r)$. Results are averaged over 20 random splits; error bars show $\pm 1$ standard error.}

    \label{fig:bdd-eval}

\end{figure}
As shown in \Cref{fig:bdd-eval}, \texttt{ROCP} matches or outperforms all baselines in both worst-case risk certificate and realized loss across all $\alpha$ values.
In particular, for the decision-agnostic conformal set constructions, pairing the same prediction sets with our robust decision rule $a_{\text{ROCP}}$ consistently improves performance over the in-set max--min rule $a_{\text{RA}}$ once $\alpha\ge 0.03$, yielding lower worst-case certificates and lower realized losses. Notably, the realized loss of \texttt{RAC} begins to increase after $\alpha\approx 0.03$, indicating that the max--min rule can become brittle when miscoverage is non-negligible in this setting, whereas \texttt{ROCP} remains stable.

Moreover, \texttt{ROCP} achieves lower critical mistake rates than \texttt{RAC} across hazardous states, while its realized loss close to it. This highlights the potential 
benefits of our method.

\section{Discussion}
Considering decision-making through the lens of expected loss minimization,
we developed a decision-theoretic interface between conformal prediction sets and downstream action selection.
An extension worth
pursuing could be to 
incorporate group-conditional, label-conditional, or localized guarantees could further reduce brittle behavior on structured subpopulations when such guarantees are statistically feasible. 

\section*{Acknowledgements}
This work was supported in part by the US NSF, ARO, AFOSR, ONR, the Simons Foundation and the Sloan Foundation.
 The authors thank Hamed Hassani, Shayan Kiyani, and Aaron Roth for helpful discussions about the work \cite{kiyani2025decision}.

\bibliography{ref}

\begin{thebibliography}{59}
\providecommand{\natexlab}[1]{#1}
\providecommand{\url}[1]{\texttt{#1}}
\expandafter\ifx\csname urlstyle\endcsname\relax
  \providecommand{\doi}[1]{doi: #1}\else
  \providecommand{\doi}{doi: \begingroup \urlstyle{rm}\Url}\fi

\bibitem[Angelopoulos et~al.(2020)Angelopoulos, Bates, Malik, and Jordan]{angelopoulos2020uncertainty}
A.~Angelopoulos, S.~Bates, J.~Malik, and M.~I. Jordan.
\newblock Uncertainty sets for image classifiers using conformal prediction.
\newblock \emph{arXiv preprint arXiv:2009.14193}, 2020.

\bibitem[Angelopoulos et~al.(2021)Angelopoulos, Bates, Cand{\`e}s, Jordan, and Lei]{angelopoulos2021learn}
A.~N. Angelopoulos, S.~Bates, E.~J. Cand{\`e}s, M.~I. Jordan, and L.~Lei.
\newblock Learn then test: Calibrating predictive algorithms to achieve risk control.
\newblock \emph{arXiv preprint arXiv:2110.01052}, 2021.

\bibitem[Angelopoulos et~al.(2022)Angelopoulos, Bates, Fisch, Lei, and Schuster]{angelopoulos2022conformal}
A.~N. Angelopoulos, S.~Bates, A.~Fisch, L.~Lei, and T.~Schuster.
\newblock Conformal risk control.
\newblock \emph{arXiv preprint arXiv:2208.02814}, 2022.

\bibitem[Angelopoulos et~al.(2023)Angelopoulos, Bates, et~al.]{angelopoulos2023conformal}
A.~N. Angelopoulos, S.~Bates, et~al.
\newblock Conformal prediction: A gentle introduction.
\newblock \emph{Foundations and Trends{\textregistered} in Machine Learning}, 16\penalty0 (4):\penalty0 494--591, 2023.

\bibitem[Balder(1985)]{balder1985elimination}
E.~Balder.
\newblock Elimination of randomization in statistical decision theory reconsidered.
\newblock \emph{Journal of multivariate analysis}, 16\penalty0 (2):\penalty0 260--264, 1985.

\bibitem[Barber et~al.(2020)Barber, Cand{\`e}s, Ramdas, and Tibshirani]{barber2020limits}
R.~F. Barber, E.~J. Cand{\`e}s, A.~Ramdas, and R.~J. Tibshirani.
\newblock The limits of distribution-free conditional predictive inference, 2020.

\bibitem[Blot et~al.(2024)Blot, Angelopoulos, Jordan, and Brunel]{blot2024automatically}
V.~Blot, A.~N. Angelopoulos, M.~I. Jordan, and N.~J. Brunel.
\newblock Automatically adaptive conformal risk control.
\newblock \emph{arXiv preprint arXiv:2406.17819}, 2024.

\bibitem[Chan et~al.(2024)Chan, Delage, and Lin]{chan2024conformal}
T.~Chan, E.~Delage, and B.~Lin.
\newblock Conformal inverse optimization for adherence-aware prescriptive analytics.
\newblock \emph{Available at SSRN}, 2024.

\bibitem[Chan and Kaw(2020)]{chan2020inverse}
T.~C. Chan and N.~Kaw.
\newblock Inverse optimization for the recovery of constraint parameters.
\newblock \emph{European Journal of Operational Research}, 282\penalty0 (2):\penalty0 415--427, 2020.

\bibitem[Chan et~al.(2023)Chan, Mahmood, and Zhu]{chan2023inverse}
T.~C. Chan, R.~Mahmood, and I.~Y. Zhu.
\newblock Inverse optimization: Theory and applications.
\newblock \emph{Operations Research}, 2023.

\bibitem[Chowdhury et~al.(2020)Chowdhury, Rahman, Khandakar, Mazhar, Kadir, Mahbub, Islam, Khan, Iqbal, Al~Emadi, et~al.]{chowdhury2020can}
M.~E. Chowdhury, T.~Rahman, A.~Khandakar, R.~Mazhar, M.~A. Kadir, Z.~B. Mahbub, K.~R. Islam, M.~S. Khan, A.~Iqbal, N.~Al~Emadi, et~al.
\newblock Can ai help in screening viral and covid-19 pneumonia?
\newblock \emph{Ieee Access}, 8:\penalty0 132665--132676, 2020.

\bibitem[Cortes-Gomez et~al.(2024)Cortes-Gomez, Pati{\~n}o, Byun, Wu, Horvitz, and Wilder]{cortes2024decision}
S.~Cortes-Gomez, C.~Pati{\~n}o, Y.~Byun, S.~Wu, E.~Horvitz, and B.~Wilder.
\newblock Decision-focused uncertainty quantification.
\newblock \emph{arXiv preprint arXiv:2410.01767}, 2024.

\bibitem[Danskin(2012)]{danskin2012theory}
J.~M. Danskin.
\newblock \emph{The theory of max-min and its application to weapons allocation problems}, volume~5.
\newblock Springer Science \& Business Media, 2012.

\bibitem[Dvoretzky et~al.(1951)Dvoretzky, Wald, and Wolfowitz]{dvoretzky1951elimination}
A.~Dvoretzky, A.~Wald, and J.~Wolfowitz.
\newblock Elimination of randomization in certain statistical decision procedures and zero-sum two-person games.
\newblock \emph{Ann. Math. Statist.}, 22\penalty0 (4):\penalty0 1--21, 1951.

\bibitem[Elmachtoub and Grigas(2022)]{elmachtoub2022smart}
A.~N. Elmachtoub and P.~Grigas.
\newblock Smart “predict, then optimize”.
\newblock \emph{Management Science}, 68\penalty0 (1):\penalty0 9--26, 2022.

\bibitem[Foygel~Barber et~al.(2021)Foygel~Barber, Candes, Ramdas, and Tibshirani]{foygel2021limits}
R.~Foygel~Barber, E.~J. Candes, A.~Ramdas, and R.~J. Tibshirani.
\newblock The limits of distribution-free conditional predictive inference.
\newblock \emph{Information and Inference: A Journal of the IMA}, 10\penalty0 (2):\penalty0 455--482, 2021.

\bibitem[Gibbs et~al.(2025)Gibbs, Cherian, and Cand{\`e}s]{gibbs2025conformal}
I.~Gibbs, J.~J. Cherian, and E.~J. Cand{\`e}s.
\newblock Conformal prediction with conditional guarantees.
\newblock \emph{Journal of the Royal Statistical Society Series B: Statistical Methodology}, page qkaf008, 2025.

\bibitem[Guan(2023)]{guan2023localized}
L.~Guan.
\newblock Localized conformal prediction: A generalized inference framework for conformal prediction.
\newblock \emph{Biometrika}, 110\penalty0 (1):\penalty0 33--50, 2023.

\bibitem[Jocher and Qiu(2024)]{yolo11_ultralytics}
G.~Jocher and J.~Qiu.
\newblock {Ultralytics YOLO11}, 2024.
\newblock URL \url{https://github.com/ultralytics/ultralytics}.

\bibitem[Johnstone and Cox(2021)]{johnstone2021conformal}
C.~Johnstone and B.~Cox.
\newblock Conformal uncertainty sets for robust optimization.
\newblock In \emph{Conformal and Probabilistic Prediction and Applications}, pages 72--90. PMLR, 2021.

\bibitem[Joshi et~al.(2025{\natexlab{a}})Joshi, Kiyani, Pappas, Dobriban, and Hassani]{joshi2025conformal}
S.~Joshi, S.~Kiyani, G.~Pappas, E.~Dobriban, and H.~Hassani.
\newblock Conformal inference under high-dimensional covariate shifts via likelihood-ratio regularization.
\newblock \emph{arXiv preprint arXiv:2502.13030}, 2025{\natexlab{a}}.

\bibitem[Joshi et~al.(2025{\natexlab{b}})Joshi, Sun, Hassani, and Dobriban]{joshi2025multirisk}
S.~Joshi, Y.~Sun, H.~Hassani, and E.~Dobriban.
\newblock Multirisk: Multiple risk control via iterative score thresholding.
\newblock \emph{arXiv preprint arXiv:2512.24587}, 2025{\natexlab{b}}.

\bibitem[Keith and Ahner(2021)]{keith2021survey}
A.~J. Keith and D.~K. Ahner.
\newblock A survey of decision making and optimization under uncertainty.
\newblock \emph{Annals of Operations Research}, 300\penalty0 (2):\penalty0 319--353, 2021.

\bibitem[Kiyani et~al.(2025)Kiyani, Pappas, Roth, and Hassani]{kiyani2025decision}
S.~Kiyani, G.~J. Pappas, A.~Roth, and H.~Hassani.
\newblock Decision theoretic foundations for conformal prediction: Optimal uncertainty quantification for risk-averse agents.
\newblock In \emph{Forty-second International Conference on Machine Learning}, 2025.
\newblock URL \url{https://openreview.net/forum?id=Ukjl86EsIk}.

\bibitem[Kuratowski and Ryll-Nardzewski(1965)]{kuratowski1965general}
K.~Kuratowski and C.~Ryll-Nardzewski.
\newblock A general theorem on selectors.
\newblock \emph{Bull. Acad. Polon. Sci. S{\'e}r. Sci. Math. Astronom. Phys}, 13\penalty0 (6):\penalty0 397--403, 1965.

\bibitem[Lehmann and Casella(1998)]{lehmann1998theory}
E.~Lehmann and G.~Casella.
\newblock Theory of point estimation.
\newblock \emph{Springer Texts in Statistics}, 1998.

\bibitem[Lei and Wasserman(2014)]{lei2014distribution}
J.~Lei and L.~Wasserman.
\newblock Distribution-free prediction bands for non-parametric regression.
\newblock \emph{Journal of the Royal Statistical Society: Series B (Statistical Methodology)}, 76\penalty0 (1):\penalty0 71--96, 2014.

\bibitem[Lei et~al.(2013)Lei, Robins, and Wasserman]{lei2013distribution}
J.~Lei, J.~Robins, and L.~Wasserman.
\newblock Distribution-free prediction sets.
\newblock \emph{Journal of the American Statistical Association}, 108\penalty0 (501):\penalty0 278--287, 2013.

\bibitem[Lei et~al.(2018)Lei, G’Sell, Rinaldo, Tibshirani, and Wasserman]{lei2018distribution}
J.~Lei, M.~G’Sell, A.~Rinaldo, R.~J. Tibshirani, and L.~Wasserman.
\newblock Distribution-free predictive inference for regression.
\newblock \emph{Journal of the American Statistical Association}, 113\penalty0 (523):\penalty0 1094--1111, 2018.

\bibitem[Lekeufack et~al.(2024)Lekeufack, Angelopoulos, Bajcsy, Jordan, and Malik]{lekeufack2024conformal}
J.~Lekeufack, A.~N. Angelopoulos, A.~Bajcsy, M.~I. Jordan, and J.~Malik.
\newblock Conformal decision theory: Safe autonomous decisions from imperfect predictions.
\newblock In \emph{2024 IEEE International Conference on Robotics and Automation (ICRA)}, pages 11668--11675. IEEE, 2024.

\bibitem[Lindemann et~al.(2023)Lindemann, Cleaveland, Shim, and Pappas]{lindemann2023safe}
L.~Lindemann, M.~Cleaveland, G.~Shim, and G.~J. Pappas.
\newblock Safe planning in dynamic environments using conformal prediction.
\newblock \emph{IEEE Robotics and Automation Letters}, 2023.

\bibitem[Noarov et~al.(2023)Noarov, Ramalingam, Roth, and Xie]{noarov2023high}
G.~Noarov, R.~Ramalingam, A.~Roth, and S.~Xie.
\newblock High-dimensional prediction for sequential decision making.
\newblock \emph{arXiv preprint arXiv:2310.17651}, 2023.

\bibitem[Papadopoulos et~al.(2002)Papadopoulos, Proedrou, Vovk, and Gammerman]{papadopoulos2002inductive}
H.~Papadopoulos, K.~Proedrou, V.~Vovk, and A.~Gammerman.
\newblock Inductive confidence machines for regression.
\newblock In \emph{European Conference on Machine Learning}, pages 345--356. Springer, 2002.

\bibitem[Park et~al.(2022)Park, Dobriban, Lee, and Bastani]{park2021pac}
S.~Park, E.~Dobriban, I.~Lee, and O.~Bastani.
\newblock {PAC} prediction sets under covariate shift.
\newblock In \emph{International Conference on Learning Representations}, 2022.

\bibitem[Patel et~al.(2024)Patel, Rayan, and Tewari]{patel2024conformal}
Y.~P. Patel, S.~Rayan, and A.~Tewari.
\newblock Conformal contextual robust optimization.
\newblock In \emph{International Conference on Artificial Intelligence and Statistics}, pages 2485--2493. PMLR, 2024.

\bibitem[Rahman et~al.(2021)Rahman, Khandakar, Qiblawey, Tahir, Kiranyaz, Kashem, Islam, Al~Maadeed, Zughaier, Khan, et~al.]{rahman2021exploring}
T.~Rahman, A.~Khandakar, Y.~Qiblawey, A.~Tahir, S.~Kiranyaz, S.~B.~A. Kashem, M.~T. Islam, S.~Al~Maadeed, S.~M. Zughaier, M.~S. Khan, et~al.
\newblock Exploring the effect of image enhancement techniques on covid-19 detection using chest x-ray images.
\newblock \emph{Computers in biology and medicine}, 132:\penalty0 104319, 2021.

\bibitem[Rockafellar and Wets(1998)]{rockafellar1998variational}
R.~T. Rockafellar and R.~J. Wets.
\newblock \emph{Variational analysis}.
\newblock Springer, 1998.

\bibitem[Romano et~al.(2019)Romano, Patterson, and Candes]{romano2019conformalized}
Y.~Romano, E.~Patterson, and E.~Candes.
\newblock Conformalized quantile regression.
\newblock \emph{Advances in neural information processing systems}, 32, 2019.

\bibitem[Romano et~al.(2020)Romano, Sesia, and Candes]{romano2020classification}
Y.~Romano, M.~Sesia, and E.~Candes.
\newblock Classification with valid and adaptive coverage.
\newblock \emph{Advances in Neural Information Processing Systems}, 33:\penalty0 3581--3591, 2020.

\bibitem[Sadinle et~al.(2019)Sadinle, Lei, and Wasserman]{Sadinle2019}
M.~Sadinle, J.~Lei, and L.~Wasserman.
\newblock {Least Ambiguous Set-Valued Classifiers With Bounded Error Levels}.
\newblock \emph{Journal of the American Statistical Association}, 114\penalty0 (525):\penalty0 223--234, 2019.

\bibitem[Saunders et~al.(1999)Saunders, Gammerman, and Vovk]{saunders1999transduction}
C.~Saunders, A.~Gammerman, and V.~Vovk.
\newblock Transduction with confidence and credibility.
\newblock In \emph{IJCAI}, 1999.

\bibitem[Scheffe and Tukey(1945)]{scheffe1945non}
H.~Scheffe and J.~W. Tukey.
\newblock Non-parametric estimation. i. validation of order statistics.
\newblock \emph{The Annals of Mathematical Statistics}, 16\penalty0 (2):\penalty0 187--192, 1945.

\bibitem[Sierpi{\'n}ski(1922)]{sierpinski1922fonctions}
W.~Sierpi{\'n}ski.
\newblock Sur les fonctions d'ensemble additives et continues.
\newblock \emph{Fundamenta Mathematicae}, 3\penalty0 (1):\penalty0 240--246, 1922.

\bibitem[Szegedy et~al.(2015)Szegedy, Liu, Jia, Sermanet, Reed, Anguelov, Erhan, Vanhoucke, and Rabinovich]{szegedy2015going}
C.~Szegedy, W.~Liu, Y.~Jia, P.~Sermanet, S.~Reed, D.~Anguelov, D.~Erhan, V.~Vanhoucke, and A.~Rabinovich.
\newblock Going deeper with convolutions.
\newblock In \emph{Proceedings of the IEEE conference on computer vision and pattern recognition}, pages 1--9, 2015.

\bibitem[Szegedy et~al.(2016)Szegedy, Vanhoucke, Ioffe, Shlens, and Wojna]{szegedy2016rethinking}
C.~Szegedy, V.~Vanhoucke, S.~Ioffe, J.~Shlens, and Z.~Wojna.
\newblock Rethinking the inception architecture for computer vision.
\newblock In \emph{Proceedings of the IEEE conference on computer vision and pattern recognition}, pages 2818--2826, 2016.

\bibitem[Tukey(1947)]{tukey1947non}
J.~W. Tukey.
\newblock Non-parametric estimation ii. statistically equivalent blocks and tolerance regions--the continuous case.
\newblock \emph{The Annals of Mathematical Statistics}, pages 529--539, 1947.

\bibitem[Vovk(2012)]{vovk2012conditional}
V.~Vovk.
\newblock Conditional validity of inductive conformal predictors.
\newblock In \emph{Asian conference on machine learning}, pages 475--490. PMLR, 2012.

\bibitem[Vovk et~al.(2005)Vovk, Gammerman, and Shafer]{vovk2005algorithmic}
V.~Vovk, A.~Gammerman, and G.~Shafer.
\newblock \emph{Algorithmic learning in a random world}, volume~29.
\newblock Springer, 2005.

\bibitem[Vovk et~al.(1999)Vovk, Gammerman, and Saunders]{vovk1999machine}
V.~Vovk, A.~Gammerman, and C.~Saunders.
\newblock Machine-learning applications of algorithmic randomness.
\newblock In \emph{International Conference on Machine Learning}, 1999.

\bibitem[Wald(1943)]{Wald1943}
A.~Wald.
\newblock {An Extension of Wilks' Method for Setting Tolerance Limits}.
\newblock \emph{The Annals of Mathematical Statistics}, 14\penalty0 (1):\penalty0 45--55, 1943.
\newblock ISSN 0003-4851.
\newblock \doi{10.1214/aoms/1177731491}.

\bibitem[Wald(1945)]{wald1945statistical}
A.~Wald.
\newblock Statistical decision functions which minimize the maximum risk.
\newblock \emph{Annals of Mathematics}, pages 265--280, 1945.

\bibitem[Wald(1949)]{wald1949statistical}
A.~Wald.
\newblock Statistical decision functions.
\newblock \emph{The Annals of Mathematical Statistics}, 20\penalty0 (2):\penalty0 165--205, 1949.

\bibitem[Wang et~al.(2025)Wang, Sun, and Dobriban]{wang2025singleton}
T.~Wang, Y.~Sun, and E.~Dobriban.
\newblock Singleton-optimized conformal prediction.
\newblock \emph{arXiv preprint arXiv:2509.24095}, 2025.

\bibitem[Wilks(1941)]{Wilks1941}
S.~S. Wilks.
\newblock {Determination of Sample Sizes for Setting Tolerance Limits}.
\newblock \emph{The Annals of Mathematical Statistics}, 12\penalty0 (1):\penalty0 91--96, 1941.
\newblock ISSN 0003-4851.
\newblock \doi{10.1214/aoms/1177731788}.

\bibitem[Yeh et~al.(2024)Yeh, Christianson, Wu, Wierman, and Yue]{yeh2024end}
C.~Yeh, N.~Christianson, A.~Wu, A.~Wierman, and Y.~Yue.
\newblock End-to-end conformal calibration for optimization under uncertainty.
\newblock \emph{arXiv preprint arXiv:2409.20534}, 2024.

\bibitem[Yu et~al.(2020)Yu, Chen, Wang, Xian, Chen, Liu, Madhavan, and Darrell]{yu2020bdd100k}
F.~Yu, H.~Chen, X.~Wang, W.~Xian, Y.~Chen, F.~Liu, V.~Madhavan, and T.~Darrell.
\newblock Bdd100k: A diverse driving dataset for heterogeneous multitask learning.
\newblock In \emph{Proceedings of the IEEE/CVF conference on computer vision and pattern recognition}, pages 2636--2645, 2020.

\bibitem[Zecchin and Simeone(2024{\natexlab{a}})]{zecchin2024adaptive}
M.~Zecchin and O.~Simeone.
\newblock Adaptive learn-then-test: Statistically valid and efficient hyperparameter selection.
\newblock \emph{arXiv preprint arXiv:2409.15844}, 2024{\natexlab{a}}.

\bibitem[Zecchin and Simeone(2024{\natexlab{b}})]{zecchin2024localized}
M.~Zecchin and O.~Simeone.
\newblock Localized adaptive risk control.
\newblock \emph{arXiv preprint arXiv:2405.07976}, 2024{\natexlab{b}}.

\bibitem[Zhao et~al.(2021)Zhao, Kim, Sahoo, Ma, and Ermon]{zhao2021calibrating}
S.~Zhao, M.~Kim, R.~Sahoo, T.~Ma, and S.~Ermon.
\newblock Calibrating predictions to decisions: A novel approach to multi-class calibration.
\newblock \emph{Advances in Neural Information Processing Systems}, 34:\penalty0 22313--22324, 2021.

\end{thebibliography}
\bibliographystyle{plainnat}

\newpage
\appendix
\onecolumn

\section{Additional Related Work and Background}
\label{app:related-work}
{\bf Conformal prediction and prediction sets.}
Prediction sets have classical roots in statistics \citep{Wilks1941,Wald1943,scheffe1945non,tukey1947non} and were developed into the modern conformal prediction framework
starting with 
\citet{saunders1999transduction,vovk1999machine,papadopoulos2002inductive,vovk2005algorithmic}.
With the rise of modern ML, conformal prediction has become a standard tool for distribution-free uncertainty quantification across tasks including classification and regression \citep{lei2018distribution,romano2020classification,romano2019conformalized,park2021pac,angelopoulos2020uncertainty,angelopoulos2023conformal}.
Our work uses conformal prediction only for its coverage guarantee; the key question we address is how to turn that guarantee into \emph{optimal downstream decisions} and \emph{decision-optimal set construction}.

{\bf Conformal methods for risk control and decision-aware sets.}
A growing literature goes beyond coverage and uses conformal ideas to control general risk measures \citep{angelopoulos2022conformal,angelopoulos2021learn,lindemann2023safe,lekeufack2024conformal,zecchin2024adaptive,blot2024automatically,zecchin2024localized,cortes2024decision,joshi2025multirisk}.
These works typically focus on guaranteeing that a chosen risk functional of the predictor is below a target level, often for a fixed decision rule.
In contrast, our goal is to \emph{jointly} characterize the optimal decision rule induced by coverage and the optimal prediction sets for that induced rule when the downstream objective is expected loss under worst-case distributions consistent with coverage.

{\bf Calibration and best-response baselines.}
When probabilistic forecasts are calibrated, best-responding to the predictive distribution is optimal for expectation-maximizing agents \citep{zhao2021calibrating,noarov2023high}.
We include calibrated best-response baselines in our experiments to highlight the practical trade-off: committing to a single action can deliver strong average performance when the model is reliable, but can incur catastrophic errors when tail events are misestimated.

\section{Proofs}
\label{app:proofs}
{\bf Measurability conventions.}
In the main text we suppress measure-theoretic details; throughout this appendix we make them explicit.
Let $(\mathcal X,\mathcal F)$, $(\mathcal Y,\mathcal G)$, and $(\mathcal A,\mathcal H)$ be standard Borel measurable spaces.
All random variables are defined on a common probability space and take values in $\mathcal X$ and $\mathcal Y$.
The loss $\ell:\mathcal A\times\mathcal Y\to[0,\infty)$ is assumed $\mathcal H\otimes\mathcal G$-measurable, and regular conditional laws exist.
Whenever we write $Q(S)$ or $\Pr(Y\in S)$, we implicitly assume $S\in\mathcal G$ is measurable.
For set-valued maps $C:\mathcal X\to 2^{\mathcal Y}$ appearing inside probabilities such as $\Pr\{Y\in C(X)\}$, we assume $C$ is measurable in the sense that its graph
$\{(x,y):y\in C(x)\}$ belongs to $\mathcal F\otimes\mathcal G$, so the event $\{Y\in C(X)\}$ is measurable.
We also tacitly restrict to policies $\pi$ for which $x\mapsto \pi(C(x))$ is $\mathcal H$-measurable, ensuring $\ell(\pi(C(X)),Y)$ is measurable.

{\bf Proof of Lemma~\ref{lem:L-formula}.}
Fix any measurable $S\subseteq\mathcal Y$ and $a\in\mathcal A$. 

{\bf Case 1:} $S = \mathcal{Y}$. 
The constraint $Q(\mathcal{Y}) \ge 1-\alpha$ holds for any probability measure $Q$. Since $\mathbb{E}_{Y \sim Q}[\ell(a,Y)] \le \sup_{y \in \mathcal{Y}} \ell(a,y)$, and conversely, taking $Q = \delta_{y^*}$ with $y^* = \arg\max_{y \in \mathcal{Y}} \ell(a,y)$ yields 
$L_{\mathcal{Y}}(a;\alpha) = \ell_{\mathcal{Y}}^{\mathrm{in}}(a)$, matching \eqref{eq:worst-case loss form} under the convention $\ell_{\mathcal{Y}}^{\mathrm{out}}(a) := \ell_{\mathcal{Y}}^{\mathrm{in}}(a)$.

{\bf Case 2:} $S \subsetneq\mathcal{Y}$. For any probability measure $Q$ on $\mathcal Y$ with $Q(S)\ge 1-\alpha$,
\begin{align*}
\mathbb E_{Y\sim Q}[\ell(a,Y)]
&= \int_S \ell(a,y)\,Q(\mathrm dy)\ +\ \int_{S^c}\ell(a,y)\,Q(\mathrm dy)\\
&\le Q(S)\,\sup_{y\in S}\ell(a,y)\ +\ (1-Q(S))\,\sup_{y\notin S}\ell(a,y)\\
&= Q(S)\,\ell_S^{\mathrm{in}}(a)\ +\ (1-Q(S))\,\ell_S^{\mathrm{out}}(a).
\end{align*}
The right-hand side is affine in $Q(S)\in[1-\alpha,1]$, hence
\[
\mathbb E_{Y\sim Q}[\ell(a,Y)]
\le
\max\Big\{\ell_S^{\mathrm{in}}(a),\ (1-\alpha)\ell_S^{\mathrm{in}}(a)+\alpha\,\ell_S^{\mathrm{out}}(a)\Big\}.
\]
Taking the supremum over all such $Q$ gives
\[
L_S(a;\alpha)\ \le\ \max\Big\{\ell_S^{\mathrm{in}}(a),\ (1-\alpha)\ell_S^{\mathrm{in}}(a)+\alpha\,\ell_S^{\mathrm{out}}(a)\Big\}.
\]

For the reverse inequality, fix $\varepsilon>0$. By definition of supremum, choose $y_{\mathrm{i}}\in S$ such that
$\ell(a,y_{\mathrm{i}})\ge \ell_S^{\mathrm{in}}(a)-\varepsilon$.
If $\ell_S^{\mathrm{out}}(a)\le \ell_S^{\mathrm{in}}(a)$, take $Q_\varepsilon=\delta_{y_{\mathrm{i}}}$; then $Q_\varepsilon(S)=1$ and
\[
\mathbb E_{Y\sim Q_\varepsilon}[\ell(a,Y)] \ge \ell_S^{\mathrm{in}}(a)-\varepsilon.
\]
If instead $\ell_S^{\mathrm{out}}(a)>\ell_S^{\mathrm{in}}(a)$, choose $y_{\mathrm{o}}\notin S$ such that
$\ell(a,y_{\mathrm{o}})\ge \ell_S^{\mathrm{out}}(a)-\varepsilon$, and take
$Q_\varepsilon=(1-\alpha)\delta_{y_{\mathrm{i}}}+\alpha\,\delta_{y_{\mathrm{o}}}$; then $Q_\varepsilon(S)=1-\alpha$ and
\[
\mathbb E_{Y\sim Q_\varepsilon}[\ell(a,Y)]
\ge (1-\alpha)(\ell_S^{\mathrm{in}}(a)-\varepsilon)+\alpha(\ell_S^{\mathrm{out}}(a)-\varepsilon)
= (1-\alpha)\ell_S^{\mathrm{in}}(a)+\alpha\,\ell_S^{\mathrm{out}}(a)-\varepsilon.
\]
In either case,
\[
L_S(a;\alpha)\ \ge\ \max\Big\{\ell_S^{\mathrm{in}}(a),\ (1-\alpha)\ell_S^{\mathrm{in}}(a)+\alpha\,\ell_S^{\mathrm{out}}(a)\Big\}-\varepsilon.
\]
Letting $\varepsilon\downarrow 0$ yields the matching lower bound. Combining with the upper bound,
\[
L_S(a;\alpha)
=
\max\Big\{\ell_S^{\mathrm{in}}(a),\ (1-\alpha)\ell_S^{\mathrm{in}}(a)+\alpha\,\ell_S^{\mathrm{out}}(a)\Big\}
=
\ell_S^{\mathrm{in}}(a)+\alpha\big(\ell_S^{\mathrm{out}}(a)-\ell_S^{\mathrm{in}}(a)\big)_+,
\]
as claimed.
\qed

{\bf Proof of Theorem~\ref{thm:main}.}
Fix any policy $\pi$ and write $a(x):=\pi(C(x))$.
We first show that for any fixed $\pi$,
\begin{equation}\label{eq:fixed-pi-sup}
\sup_{P\in\mathcal{P}_\alpha}\ \mathbb E_P\,\ell\big(\pi(C(X)),Y\big)
 = \sup_{x\in\mathcal X} L_{C(x)}\big(\pi(C(x));\alpha\big).
\end{equation}

\emph{Upper bound.}
Take any $P\in\mathcal{P}_\alpha$. By the definition of $\mathcal{P}_\alpha$, for $P_X$-a.e.\ $x$ the conditional law
$Q_x(\cdot):=P(Y\in\cdot\mid X=x)$ satisfies $Q_x(C(x))\ge 1-\alpha$.
Hence, by the definition \eqref{eq:wc-L} of $L_S(a;\alpha)$,
\[
\mathbb E_P\!\left[\ell\big(a(X),Y\big)\mid X=x\right]
\le L_{C(x)}\big(a(x);\alpha\big)
= L_{C(x)}\big(\pi(C(x));\alpha\big)
\quad\text{for $P_X$-a.e.\ }x.
\]
Taking expectations over $X$ gives
\[
\mathbb E_P\,\ell\big(\pi(C(X)),Y\big)
\le \mathbb E_{X\sim P_X}\!\left[L_{C(X)}\big(\pi(C(X));\alpha\big)\right]
\le \sup_{x\in\mathcal X} L_{C(x)}\big(\pi(C(x));\alpha\big).
\]
Since this holds for all $P\in\mathcal{P}_\alpha$, we obtain
\[
\sup_{P\in\mathcal{P}_\alpha}\ \mathbb E_P\,\ell\big(\pi(C(X)),Y\big)
\le \sup_{x\in\mathcal X} L_{C(x)}\big(\pi(C(x));\alpha\big).
\]

\emph{Lower bound.}
Fix $\varepsilon>0$ and choose $x_\varepsilon\in\mathcal X$ such that
\[
L_{C(x_\varepsilon)}\big(\pi(C(x_\varepsilon));\alpha\big)
\ \ge\
\sup_{x\in\mathcal X}L_{C(x)}\big(\pi(C(x));\alpha\big)-\varepsilon.
\]
Let $S_\varepsilon:=C(x_\varepsilon)$ and $a_\varepsilon:=\pi(S_\varepsilon)$, and write $
\ell^{\mathrm{in}}_\varepsilon:=\ell^{\mathrm{in}}_{S_\varepsilon}(a_\varepsilon)$, $
\ell^{\mathrm{out}}_\varepsilon:=\ell^{\mathrm{out}}_{S_\varepsilon}(a_\varepsilon) $.
Set $P_X=\delta_{x_\varepsilon}$.

\emph{Case 1: $S_\varepsilon\ne\mathcal Y$ and $\ell^{\mathrm{out}}_\varepsilon> \ell^{\mathrm{in}}_\varepsilon$.}
Choose $y_{\mathrm{i}}\in S_\varepsilon$ and $y_{\mathrm{o}}\notin S_\varepsilon$ such that
\[
\ell(a_\varepsilon,y_{\mathrm{i}})\ge \ell^{\mathrm{in}}_\varepsilon-\varepsilon,
\qquad
\ell(a_\varepsilon,y_{\mathrm{o}})\ge \ell^{\mathrm{out}}_\varepsilon-\varepsilon.
\]
Define $Y\mid X=x_\varepsilon \sim (1-\alpha)\delta_{y_{\mathrm{i}}}+\alpha\,\delta_{y_{\mathrm{o}}}$.
Then $P\in\mathcal{P}_\alpha$ and
\[
\mathbb E_P\,\ell\big(\pi(C(X)),Y\big)
\ge (1-\alpha)\ell^{\mathrm{in}}_\varepsilon+\alpha\,\ell^{\mathrm{out}}_\varepsilon-\varepsilon
= L_{S_\varepsilon}(a_\varepsilon;\alpha)-\varepsilon.
\]

\emph{Case 2: $S_\varepsilon=\mathcal Y$ or $\ell^{\mathrm{out}}_\varepsilon\le \ell^{\mathrm{in}}_\varepsilon$.}
Choose $y_{\mathrm{i}}\in S_\varepsilon$ such that
$\ell(a_\varepsilon,y_{\mathrm{i}})\ge \ell^{\mathrm{in}}_\varepsilon-\varepsilon$
and define $Y\mid X=x_\varepsilon\sim \delta_{y_{\mathrm{i}}}$.
Then $P\in\mathcal{P}_\alpha$ and
\[
\mathbb E_P\,\ell\big(\pi(C(X)),Y\big)
=\ell(a_\varepsilon,y_{\mathrm{i}})
\ge \ell^{\mathrm{in}}_\varepsilon-\varepsilon
= L_{S_\varepsilon}(a_\varepsilon;\alpha)-\varepsilon.
\]

Combining the two cases, we have constructed $P\in\mathcal{P}_\alpha$ such that
\[
\mathbb E_P\,\ell\big(\pi(C(X)),Y\big)\ \ge\ L_{S_\varepsilon}(a_\varepsilon;\alpha)-\varepsilon.
\]
By the choice of $x_\varepsilon$ and since $S_\varepsilon=C(x_\varepsilon)$, also
\[
L_{S_\varepsilon}(a_\varepsilon;\alpha)
=L_{C(x_\varepsilon)}\big(\pi(C(x_\varepsilon));\alpha\big)
\ge \sup_{x\in\mathcal X}L_{C(x)}\big(\pi(C(x));\alpha\big)-\varepsilon.
\]
Therefore
\[
\sup_{P\in\mathcal{P}_\alpha}\ \mathbb E_P\,\ell\big(\pi(C(X)),Y\big)
\ \ge\
\sup_{x\in\mathcal X}L_{C(x)}\big(\pi(C(x));\alpha\big)-2\varepsilon.
\]
Letting $\varepsilon\downarrow 0$ yields the reverse inequality in \eqref{eq:fixed-pi-sup}.

Now minimize over $\pi$. For any $\pi$,
\[
\sup_{x\in\mathcal X} L_{C(x)}\big(\pi(C(x));\alpha\big)\ \ge\ \sup_{x\in\mathcal X}\ \min_{a\in\mathcal A} L_{C(x)}(a;\alpha),
\]
hence
\[
\inf_{\pi}\ \sup_{P\in\mathcal{P}_\alpha}\ \mathbb E\,\ell\big(\pi(C(X)),Y\big)
=\inf_{\pi}\ \sup_{x\in\mathcal X} L_{C(x)}\big(\pi(C(x));\alpha\big)
\ \ge\ \sup_{x\in\mathcal X}\ \min_{a\in\mathcal A} L_{C(x)}(a;\alpha).
\]
Conversely, by the attainment assumption, for each set $S$ in the range $\mathrm{Im}(C)=\{C(x):x\in\mathcal X\}$ pick
$\pi^\star(S)\in\arg\min_{a\in\mathcal A}L_S(a;\alpha)$.
Then
\[
\sup_{x\in\mathcal X} L_{C(x)}\big(\pi^\star(C(x));\alpha\big)
=\sup_{x\in\mathcal X}\ \min_{a\in\mathcal A} L_{C(x)}(a;\alpha),
\]
so equality holds in \eqref{eq:value} and $\pi^\star$ satisfies \eqref{pi-star}.

Finally, under the additional attainment assumptions of the relevant suprema in $\ell^{\mathrm{in}}$ and $\ell^{\mathrm{out}}$, and attainment of the outer supremum over $x$, the above construction with $\varepsilon=0$ yields a worst-case $P^\star\in\mathcal{P}_\alpha$ concentrated at such an $x^\star$, with $Y\mid X=x^\star$ as stated in the theorem.
\qed
\begin{remark}[Measurable selection for \eqref{pi-star}]
\label{rem:measurable-pi-star}
The final step of the proof above picks, for each set $S$ in the range of $C$, some minimizer
$\pi^\star(S)\in\arg\min_{a\in\mathcal A}L_S(a;\alpha)$.
This pointwise choice does not automatically ensure that the composite map $x\mapsto \pi^\star(C(x))$ is $\mathcal H$-measurable, as required by our measurability conventions.
A sufficient condition is the following: assume that the function $f:\mathcal X\times\mathcal A\to\overline{\mathbb R}$ defined by
$f(x,a):=L_{C(x)}(a;\alpha)$ is a normal integrand in the sense of Definition~\ref{def:normal-integrand} and that, for every $x$, the minimum of $a\mapsto f(x,a)$ is attained.
Then the argmin correspondence
$\Gamma(x):=\arg\min_{a\in\mathcal A} f(x,a)$
is a measurable multifunction with nonempty closed values (see, e.g., \cite{rockafellar1998variational}, Thm.~14.37).
Since $(\mathcal A,\mathcal H)$ is standard Borel, the Kuratowski--Ryll--Nardzewski measurable selection theorem \citep{kuratowski1965general}
yields an $\mathcal H$-measurable selector $a^\star:\mathcal X\to\mathcal A$ such that $a^\star(x)\in\Gamma(x)$ for all $x$.
Taking $x\mapsto \pi^\star(C(x)):=a^\star(x)$ gives a measurable minimax-optimal policy satisfying \eqref{pi-star}.
\end{remark}

{\bf Proof of Proposition~\ref{prop:EL-pointwise}.}
Fix $x\in\mathcal X$ and $t\in(0,1]$. Let $C\subseteq\mathcal Y$ be measurable with
$\mathbb P(Y\in C\mid X=x)\ge t$, and fix any $a\in\mathcal A$.
Set
$s:=\ell_{C}^{\mathrm{in}}(a)=\sup_{y\in C}\ell(a,y)$.
Then $C\subseteq S_s(a):=\{y:\ell(a,y)\le s\}$, hence
\[
\mathbb P\big(\ell(a,Y)\le s\mid X=x\big)
=\mathbb P\big(Y\in S_s(a)\mid X=x\big)
\ge \mathbb P(Y\in C\mid X=x)\ge t.
\]
By definition of $Q_t^x(a)$, this implies $s\ge Q_t^x(a)$.

If $s<M(a)$, then $C^c\supseteq\{y:\ell(a,y)>s\}$, so $\ell_C^{\mathrm{out}}(a)=M(a)$.
Applying Lemma~\ref{lem:L-formula} with $\alpha=1-t$ gives
\[
L_C(a;1-t)
= s+(1-t)\big(\ell_C^{\mathrm{out}}(a)-s\big)_+
= s+(1-t)(M(a)-s)
= t\,s+(1-t)M(a)
\ge t\,Q_t^x(a)+(1-t)M(a).
\]
If $s=M(a)$, then $L_C(a;1-t)\ge s=M(a)\ge t\,Q_t^x(a)+(1-t)M(a)$ (since $Q_t^x(a)\le M(a)$).
Therefore, for every feasible $C$,
\[
R(C,1-t)=\min_{a\in\mathcal A}L_C(a;1-t)
\ \ge\
\min_{a\in\mathcal A}\Big\{t\,Q_t^x(a)+(1-t)M(a)\Big\}.
\]

Now let $a(x,t)$ be as in \eqref{eq:pointwise-optimal action}, define
$\theta(x,t):=Q_t^x(a(x,t))$, and set $C(x,t)$ as in \eqref{eq:pointwise-optimal set}.
Then $\mathbb P(Y\in C(x,t)\mid X=x)=\mathbb P(\ell(a(x,t),Y)\le \theta(x,t)\mid X=x)\ge t$, so $C(x,t)$ is feasible.
Moreover, since $C(x,t)=S_{\theta(x,t)}(a(x,t))$, we have
$\ell^{\mathrm{in}}_{C(x,t)}(a(x,t))\le \theta(x,t)$.
On the other hand, the argument above showed that for any feasible $C$,
$\ell_C^{\mathrm{in}}(a)\ge Q_t^x(a)$; applying this to $C(x,t)$ and $a(x,t)$ yields
\[
\ell^{\mathrm{in}}_{C(x,t)}(a(x,t))\ge Q_t^x(a(x,t))=\theta(x,t).
\]
Therefore $\ell^{\mathrm{in}}_{C(x,t)}(a(x,t))=\theta(x,t)$. If $\theta(x,t)<M(a(x,t))$, then $\ell^{\mathrm{out}}_{C(x,t)}(a(x,t))=M(a(x,t))$. If instead $\theta(x,t)=M(a(x,t))$, then trivially
$\ell^{\mathrm{out}}_{C(x,t)}(a(x,t))\le M(a(x,t))=\ell^{\mathrm{in}}_{C(x,t)}(a(x,t))$,
so $\big(\ell^{\mathrm{out}}_{C(x,t)}(a(x,t))-\ell^{\mathrm{in}}_{C(x,t)}(a(x,t))\big)_+=0$. Thus Lemma~\ref{lem:L-formula} yields
\[
L_{C(x,t)}(a(x,t);1-t)
= t\,\theta(x,t)+(1-t)M(a(x,t)).
\]
Consequently,
\[
R(C(x,t),1-t)
\le L_{C(x,t)}(a(x,t);1-t)
= \min_{a\in\mathcal A}\Big\{t\,Q_t^x(a)+(1-t)M(a)\Big\},
\]
which matches the lower bound, proving optimality and the claimed value.

Finally, if the suprema are attained, pick
$y_{\mathrm{i}}\in\arg\max_{y\in C(x,t)}\ell(a(x,t),y)$ and
$y_{\mathrm{o}}\in\arg\max_{y\in\mathcal Y}\ell(a(x,t),y)$ and take
$Q^\star=t\delta_{y_{\mathrm{i}}}+(1-t)\delta_{y_{\mathrm{o}}}$.
Then $Q^\star(C(x,t))\ge t$ and
$\mathbb E_{Y\sim Q^\star}[\ell(a(x,t),Y)]=t\,\theta(x,t)+(1-t)M(a(x,t))=L_{C(x,t)}(a(x,t);1-t)$,
so $Q^\star$ achieves the inner supremum in $L_{C(x,t)}(a(x,t);1-t)$.
\qed

{\bf Proof of Theorem~\ref{thm:pop-opt-general}.}
We first introduce the definition of normal integrands.
\begin{definition}[Normal integrands \citep{rockafellar1998variational}]
\label{def:normal-integrand}
Let $(\mathcal X,\mathcal F)$ be a measurable space and let $(\mathcal Z,\mathcal B(\mathcal Z))$ be a Polish space with its Borel $\sigma$-field.
A function $f:\mathcal X\times\mathcal Z\to\overline{\mathbb R}$ is called a \emph{normal integrand}
if its epigraphical mapping $S_f:\mathcal X\rightrightarrows \mathcal Z\times\mathbb R$,
defined by
\[
S_f(x) := \operatorname{epi} f(x,\cdot)
 := \big\{(z,\alpha)\in \mathcal Z\times\mathbb R:\ f(x,z)\le \alpha\big\},
\]
is closed-valued and measurable (i.e., its graph
$\{(x,z,\alpha): (z,\alpha)\in S_f(x)\}$ belongs to
$\mathcal F\otimes \mathcal B(\mathcal Z)\otimes \mathcal B(\mathbb R)$).
\end{definition}

Take $\mathcal Z=[0,1]$. We assume $(x,t)\mapsto V_x(t)$ in \eqref{eq:v_x(t)} is a normal integrand  in the sense of Definition~\ref{def:normal-integrand}.
Equivalently,\footnote{Under the measurability conventions, a sufficient condition is that
$\mathcal A$ is a compact metric space with $\mathcal H=\mathcal B(\mathcal A)$ and that
$a\mapsto \ell(a,y)$ is lower semicontinuous for every $y\in\mathcal Y$.}
\begin{enumerate}
\item[(i)] $(x,t)\mapsto V_x(t)$ is $\mathcal F\otimes\mathcal B([0,1])$-measurable;
\item[(ii)] for each $x$, the map $t\mapsto V_x(t)$ is lower semicontinuous on $[0,1]$.
\end{enumerate}

Recall the population problem \eqref{eq:pop-problem}:
\[
\operatorname{VAL}(\alpha)
=\inf_{\substack{t:\mathcal X\to[0,1]\ \text{measurable}\\ \mathbb E[t(X)]\ge 1-\alpha}}
\ \mathbb E\!\left[V_X\big(t(X)\big)\right].
\]
For $\beta \ge0$, define the dual function 
\[
\phi(\beta) := \beta(1-\alpha)\ +\ \mathbb E\!\left[\inf_{u\in[0,1]}\bigl\{V_X(u)-\beta u\bigr\}\right].
\]
and define the (set-valued) argmin correspondence
    \[
    \Gamma_\beta(x):=\arg\min_{u\in[0,1]}\bigl\{V_x(u)-\beta u\bigr\},
    \]
    and its extremal selectors
    \[
    g^+(x,\beta):=\max\Gamma_\beta(x),\qquad g^-(x,\beta):=\min\Gamma_\beta(x).
    \]
Since we assume that $(x,u)\mapsto V_x(u)$ is a normal integrand and $u\mapsto -\beta u$ is continuous,
the function $V_x(u)-\beta u$ is also a normal integrand; hence $\Gamma_\beta(x)=\arg\min_{u\in[0,1]} \{V_x(u)-\beta u\}$ is a measurable multifunction
with nonempty compact values (see, e.g., \cite{rockafellar1998variational}, Thm.~14.37), and therefore $g^{-}(\cdot,\beta)$ and $g^{+}(\cdot,\beta)$ are measurable (see, e.g., \cite{rockafellar1998variational}, Def. 14.1 or Ex.~14.51).

\begin{proof}[Proof of Theorem~\ref{thm:pop-opt-general}]
For $\beta\ge 0$, define the Lagrangian
\[
\mathcal L(t;\beta)
:=\mathbb E\!\left[V_X\big(t(X)\big)\right]+\beta\Big((1-\alpha)-\mathbb E[t(X)]\Big)
=\beta(1-\alpha)+\mathbb E\!\left[V_X\big(t(X)\big)-\beta\,t(X)\right].
\]
By Lemma~\ref{lem:interchange},
\[
\inf_{\substack{t:\mathcal X\to[0,1]\\ t\ \text{measurable}}}\ \mathcal L(t;\beta)
=\beta(1-\alpha)+\mathbb E\!\left[\inf_{u\in[0,1]}\{V_X(u)-\beta u\}\right]
=:\phi(\beta).
\]
Weak duality gives $\operatorname{VAL}(\alpha)\ge \sup_{\beta\ge 0}\phi(\beta)$.
By Lemma~\ref{lem:no-gap}, there is no duality gap, hence
\[
\operatorname{VAL}(\alpha)=\sup_{\beta\ge 0}\phi(\beta).
\]    
Moreover, the supremum is attained. For any $\beta\ge0$,
\[
\phi(\beta)
=\beta(1-\alpha)+\mathbb E\!\left[\inf_{u\in[0,1]}\{V_X(u)-\beta u\}\right]
\le \beta(1-\alpha)+\mathbb E[V_X(1)-\beta]
= \mathbb E[V_X(1)]-\alpha\beta,
\]
so $\phi(\beta)\to-\infty$ as $\beta\to\infty$ (since $\alpha>0$). Hence there exists $B<\infty$ such that
$\sup_{\beta\ge0}\phi(\beta) = \sup_{\beta\in[0,B]}\phi(\beta)$.
Next, define $\psi_x(\beta):=\inf_{u\in[0,1]}\{V_x(u)-\beta u\}$. For any $\beta,\beta'\ge0$ and any $x$,
\[
|\psi_x(\beta)-\psi_x(\beta')|\ \le\ |\beta-\beta'|
\]
since $u\in[0,1]$. Taking expectations and adding the linear term $\beta(1-\alpha)$ shows that $\phi$ is Lipschitz (hence continuous) on $[0,B]$.
Therefore, by compactness of $[0,B]$, there exists $\beta^\star\in[0,B]$ maximizing $\phi$. In addition, since each $\psi_x$ is concave and $1$-Lipschitz in $\beta$, hence the one-sided derivatives
$\psi'_{x,+}(\beta)$ and $\psi'_{x,-}(\beta)$ exist for all $\beta\ge 0$. By Danskin’s theorem \citep{danskin2012theory}, the one-sided derivatives satisfy
\[
\psi'_{x,+}(\beta) = -g^{+}(x,\beta),
\qquad
\psi'_{x,-}(\beta) = -g^{-}(x,\beta).
\]
Moreover, for any $h\neq 0$ we have
\[
\left|\frac{\psi_x(\beta+h)-\psi_x(\beta)}{h}\right|\le 1,
\]
so by dominated convergence we can obtain the one-sided derivatives of $\phi$:
\[
\phi'_{+}(\beta)
=(1-\alpha)+\mathbb E\big[\psi'_{X,+}(\beta)\big]
=(1-\alpha)-\mathbb E\big[g^{+}(X,\beta)\big],
\]
\[
\phi'_{-}(\beta)
=(1-\alpha)+\mathbb E\big[\psi'_{X,-}(\beta)\big]
=(1-\alpha)-\mathbb E\big[g^{-}(X,\beta)\big].
\]
Since $\phi$ is concave on $[0,\infty)$ and $\beta^\star$ maximizes $\phi$ if and only if $0 \in \partial \phi(\beta^\star)$, where the superdifferential $\partial \phi(\beta)$ is the closed interval $[\phi'_{+}(\beta), \phi'_{-}(\beta)]$. Thus if $\beta^\star>0$ (interior maximizer) then
$\phi'_{+}(\beta^\star)\le 0\le \phi'_{-}(\beta^\star)$,
which is equivalent to the interval condition \eqref{eq:interval condition}; For the boundary case $\beta^\star=0$, view $\phi$ as an extended-real concave function on $\mathbb R$ by setting $\phi(\beta):=-\infty$ for $\beta<0$.
Then $\partial \phi(0)=[\phi'_{+}(0),+\infty)$, so the optimality condition $0\in\partial\phi(0)$ reduces to $\phi'_{+}(0)\le 0$,
i.e.\ $1-\alpha\le \mathbb E[g^{+}(X,0)]$.

Next, we construct a primal optimizer. If $\beta^\star=0$,
let $t^\star(x):=g^+(x,0)\in\Gamma_0(x)$.
Since in this case $1-\alpha\le \mathbb E[g^{+}(X,0)]$, so $t^\star$ is measurable and feasible. If $\beta^\star>0$,
define $w(x):=g^+(x,\beta^\star)-g^-(x,\beta^\star)\ge 0$ and
$r:=(1-\alpha)-\mathbb E[g^{-}(X,\beta^\star)]$. By the interval condition, $r\in[0,\mathbb E[w(X)]]$. Define the finite measure $\nu$ on $(\mathcal{X}, \mathcal{F})$ by
$$
\nu(B):=\mathbb{E}\left[w(X) \mathbf{1}_{\{X \in B\}}\right]=\int_B w(x) P_X(d x) .
$$
Because $P_X$ is non-atomic and $\nu \ll P_X, \nu$ is also non-atomic. By Sierpiński's theorem \citep{sierpinski1922fonctions}, for any $0 \leq s \leq \nu(\mathcal{X})$, there exists a measurable $A \in \mathcal{F}$ with $\nu(A)=s$. Apply this with $s=r$ to get $A \in \mathcal{F}$ such that $\mathbb{E}\left[w(X) \mathbf{1}_A(X)\right]=r$. Let
\[
t^\star(x):=g^-(x,\beta^\star)+w(x)\mathbf 1_A(x).
\]
Then $t^\star(x)\in\{g^-(x,\beta^\star),g^+(x,\beta^\star)\}\subseteq\Gamma_{\beta^\star}(x)$,
and
\[
\mathbb E[t^\star(X)]
=\mathbb E[g^-(X,\beta^\star)]+\mathbb E[w(X)\mathbf 1_A(X)]
=1-\alpha.
\]
Finally, since in both case, $t^\star(x)\in\Gamma_{\beta^\star}(x)$,
\[
V_X(t^\star(X))-\beta^\star t^\star(X)
=\inf_{u\in[0,1]}\{V_X(u)-\beta^\star u\}\quad\text{a.s.}
\]
Taking expectations gives
\[
\mathbb E[V_X(t^\star(X))]+\beta^\star\bigl((1-\alpha)-\mathbb E[t^\star(X)]\bigr)
=\phi(\beta^\star),
\]
If $\beta^{\star}>0$, we constructed $\mathbb{E}\left[t^{\star}(X)\right]=1-\alpha$, so the Lagrange term vanishes and $\mathbb{E}\left[V_X\left(t^{\star}(X)\right)\right]=\phi\left(\beta^{\star}\right)$. If $\beta^{\star}=0$, then $\mathbb{E}\left[V_X\left(t^{\star}(X)\right)\right]=\phi(0)$ directly.
In both cases, since $\beta^{\star}$ maximizes $\phi$,
$$
\mathbb{E}\left[V_X\left(t^{\star}(X)\right)\right]=\phi\left(\beta^{\star}\right)=\max _{\beta \geq 0} \phi(\beta)=\operatorname{VAL}(\alpha)
$$
by strong duality (Lemma \ref{lem:no-gap}). Hence, $t^\star$ is primal optimal.

Finally, plugging $t^\star$ into \eqref{eq:optimal set} yields the stated optimal prediction sets, with actions $a^\star(x)=a\big(x,t^\star(x)\big)$ as in \eqref{eq:pointwise-optimal action}.
\end{proof}

\subsection{Proofs of Auxiliary Lemmas}
\label{app:auxi_lemma}

\begin{lemma}[Interchange of infimum and expectation for the Lagrangian]
\label{lem:interchange}
Fix $\beta\ge 0$.
Assume $(x,t)\mapsto V_x(t)$ is a normal integrand.
Then
\begin{equation}
\label{eq:interchange}
\inf_{\substack{t:\mathcal X\to[0,1]\\ t\ \text{measurable}}}
\mathbb E\!\left[V_X\big(t(X)\big)-\beta\,t(X)\right]
\;=\;
\mathbb E\!\left[\inf_{u\in[0,1]}\big\{V_X(u)-\beta u\big\}\right].
\end{equation}
Moreover, there exists a measurable selector $t_\beta:\mathcal X\to[0,1]$ such that
\[
t_\beta(x)\in\arg\min_{u\in[0,1]}\big\{V_x(u)-\beta u\big\}
\quad\text{for $P_X$-a.e.\ $x$},
\]
and \eqref{eq:interchange} holds with $t=t_\beta$.
\end{lemma}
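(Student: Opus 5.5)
The proof will go through the standard interchange-of-minimization-and-integration principle for normal integrands (Rockafellar--Wets, Thm.~14.60). I will build the measurable minimizer directly, and then check the two inequalities by hand.

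\emph{Setup.} Put $f(x,u):=V_x(u)-\beta u$ on $\mathcal X\times[0,1]$. Since $V$ is a normal integrand and $(x,u)\mapsto-\beta u$ is continuous and finite, $f$ is also a normal integrand. As noted before the proof of Theorem~\ref{thm:pop-opt-general}, Thm.~14.37 of \cite{rockafellar1998variational} then gives two facts:
\begin{itemize}
\item $p(x):=\inf_{u\in[0,1]}f(x,u)$ is $\mathcal F$-measurable;
\item $\Gamma_\beta(x)=\arg\min_u f(x,u)$ is a measurable multifunction.
\end{itemize}
The values of $\Gamma_\beta$ are nonempty and compact, because $f(x,\cdot)$ is lower semicontinuous on the compact set $[0,1]$ and is not identically $+\infty$ ($V_x$ is finite, being bounded by $M$-type quantities).

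\emph{Measurable selector.} By Kuratowski--Ryll-Nardzewski \citep{kuratowski1965general}, or more concretely by taking $t_\beta(x):=g^-(x,\beta)=\min\Gamma_\beta(x)$, I get a measurable $t_\beta$ with $t_\beta(x)\in\Gamma_\beta(x)$ for every $x$. Hence $f(x,t_\beta(x))=p(x)$ for all $x$.

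\emph{Integrability.} Before integrating I need to know the expectations are well defined. Since $\ell\ge0$, we have $V_x\ge0$, so
\[
f(x,u)\ge-\beta \quad\text{and}\quad p(x)\ge-\beta .
\]
Both integrands are therefore bounded below, and their expectations exist in $(-\infty,+\infty]$.

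\emph{The two inequalities.}
\begin{itemize}
\item For any measurable $t:\mathcal X\to[0,1]$, pointwise $f(x,t(x))\ge p(x)$. The composite $x\mapsto f(x,t(x))$ is measurable because $f$ is jointly measurable and $t$ is measurable. Integrating gives $\mathbb E[f(X,t(X))]\ge\mathbb E[p(X)]$, so the left side of \eqref{eq:interchange} is at least the right side.
\item Choosing $t=t_\beta$ gives $\mathbb E[f(X,t_\beta(X))]=\mathbb E[p(X)]$. So the infimum on the left is at most, hence equal to, the right side, and it is attained at $t_\beta$.
\end{itemize}
This proves both claims of the lemma, with the selector property holding for every $x$ and so in particular $P_X$-a.e.

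\emph{Main obstacle.} The only delicate point is the measurability of $p$ and of the argmin multifunction. This is exactly where normality is needed: mere joint measurability would not suffice for the infimum over an uncountable set. I will cite Thm.~14.37 of \cite{rockafellar1998variational} rather than reprove it. If a self-contained argument were desired, lower semicontinuity alone does not allow restricting the infimum to rational $u$. I would instead use the Castaing representation of the epigraph multifunction (Thm.~14.5 of \cite{rockafellar1998variational}) to write $p$ as a countable infimum of measurable functions.
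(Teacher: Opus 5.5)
Your proof is correct and follows essentially the same route as the paper. Both arguments get measurability of the marginal function and of the argmin multifunction from Thm.~14.37 of Rockafellar--Wets, use the pointwise bound $f(x,t(x))\ge\inf_u f(x,u)$ for one direction, and use a measurable minimizer for the other. The differences are cosmetic: you may take the explicit selector $g^-(\cdot,\beta)$ instead of invoking Kuratowski--Ryll-Nardzewski, and you control integrability through the lower bound $f\ge-\beta$ (giving expectations in $(-\infty,+\infty]$), whereas the paper uses the two-sided bound $0\le V_x\le M(a_0)$ to get finiteness.
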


{\bf Proof of Lemma~\ref{lem:interchange}.}
We first note that all expectations in \eqref{eq:interchange} are well-defined and finite.
Fix any $a_0\in\mathcal A$. For $t\in(0,1]$ we have $0\le Q_t^x(a_0)\le M(a_0)$, hence
$V_x(t)\le t\,Q_t^x(a_0)+(1-t)M(a_0)\le M(a_0)$.
At $t=0$ we have $V_x(0)=\min_{a\in\mathcal A}M(a)\le M(a_0)$ by definition.
Therefore $0\le V_x(t)\le M(a_0)$ for all $(x,t)\in\mathcal X\times[0,1]$.

Hence $|V_X(t(X))-\beta t(X)|\le M(a_0)+\beta\in L^1(P_X)$. Now, fix $\beta\ge 0$ and define
\[
f(x,u) := V_x(u)-\beta u,\qquad (x,u)\in\mathcal X\times[0,1],
\qquad
\underline f(x) := \inf_{u\in[0,1]} f(x,u).
\]
Since $u\mapsto -\beta u$ is continuous, $f$ is a normal integrand whenever $V$ is. Hence the marginal function
$\underline f$ is $\mathcal F$-measurable (see, e.g., \cite{rockafellar1998variational}, Thm.~14.37). For any measurable $t:\mathcal X\to[0,1]$ we have pointwise
$f(x,t(x))\ge \inf_{u\in[0,1]}f(x,u)=\underline f(x)$, and therefore
\[
\mathbb E\!\left[f\big(X,t(X)\big)\right]\ \ge\ \mathbb E\!\left[\underline f(X)\right].
\]
Taking the infimum over all measurable $t$ yields the ``$\ge$'' direction in \eqref{eq:interchange}.

For the reverse direction, note that for each $x$ the function $u\mapsto f(x,u)$ has closed epigraph, hence is l.s.c.\ on the compact interval $[0,1]$ and attains its minimum. Let
\[
\Gamma(x) := \arg\min_{u\in[0,1]} f(x,u).
\]
Then $\Gamma(x)$ is nonempty and compact. Moreover, because $f$ is a normal integrand, the argmin multifunction
$\Gamma:\mathcal X\rightrightarrows[0,1]$ has a measurable graph (again see \cite{rockafellar1998variational}, Thm.~14.37).
By the Kuratowski--Ryll--Nardzewski measurable selection theorem \citep{kuratowski1965general}, there exists a measurable selector
$t_\beta:\mathcal X\to[0,1]$ such that $t_\beta(x)\in\Gamma(x)$ for $P_X$-a.e.\ $x$. Consequently,
$f\big(x,t_\beta(x)\big)=\underline f(x)$ a.e., and integrating yields equality in \eqref{eq:interchange} (and attainment).
\qed

\begin{lemma}[No duality gap for the average-coverage problem under non-atomic $P_X$]
\label{lem:no-gap}
Assume $P_X$ is non-atomic and that  $(x,t)\mapsto V_x(t)$ is a normal integrand.
Fixing $\alpha\in(0,1)$, we have
\[
\operatorname{VAL}(\alpha) = \sup_{\beta\ge 0}\ \phi(\beta),
\]
i.e., there is no duality gap between the primal problem and its Lagrange dual.
\end{lemma}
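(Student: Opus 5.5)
Weak duality $\operatorname{VAL}(\alpha)\ge\sup_{\beta\ge0}\phi(\beta)$ was already noted, so the task is the reverse inequality. The plan is to convexify the primal by allowing randomized coverage assignments, prove strong duality there, and then derandomize using non-atomicity of $P_X$.

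\emph{Step 1: relaxation.} Consider Markov kernels $\kappa(x,\mathrm du)$ on $[0,1]$ and the relaxed problem
\[
\operatorname{VAL}_{\mathrm{rel}}(\alpha):=\inf_{\kappa}\Big\{\mathbb E\!\int V_X(u)\,\kappa(X,\mathrm du):\ \mathbb E\!\int u\,\kappa(X,\mathrm du)\ge 1-\alpha\Big\}.
\]
Deterministic $t$ correspond to Dirac kernels, so $\operatorname{VAL}_{\mathrm{rel}}\le\operatorname{VAL}$. The relaxed problem is linear in $\kappa$, with a single scalar constraint. The map $\kappa\mapsto(\mathbb E\!\int u\,\mathrm d\kappa,\ \mathbb E\!\int V\,\mathrm d\kappa)$ has a convex image $\mathcal K\subset\mathbb R^2$, which is bounded because $0\le V_x\le M(a_0)$ by the bound in Lemma~\ref{lem:interchange}.

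\emph{Step 2: strong duality for the relaxation.} Use the Slater point $\kappa=\delta_1$, which gives $\mathbb E[1]=1>1-\alpha$. With it, the standard finite-dimensional separating hyperplane (Lagrange) theorem for a convex program with one inequality constraint gives
\[
\operatorname{VAL}_{\mathrm{rel}}=\sup_{\beta\ge0}\inf_\kappa\Big\{\beta(1-\alpha)+\mathbb E\!\int (V_X(u)-\beta u)\,\kappa(X,\mathrm du)\Big\}.
\]
One can equivalently phrase this step as Fenchel--Rockafellar duality. For fixed $\beta$, the inner infimum over kernels equals $\phi(\beta)$. It is at least $\mathbb E[\inf_u\{V_X(u)-\beta u\}]$ pointwise, and Dirac kernels at the measurable selector $t_\beta$ of Lemma~\ref{lem:interchange} attain it. Hence $\operatorname{VAL}_{\mathrm{rel}}=\sup_\beta\phi(\beta)$.

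\emph{Step 3: derandomization.} I expect this to be the main obstacle. We need every feasible kernel to be matched, or beaten up to $\varepsilon$, by a deterministic $t$, which shows $\operatorname{VAL}\le\operatorname{VAL}_{\mathrm{rel}}$. The set of pairs $(\mathbb E[t(X)],\mathbb E[V_X(t(X))])$ achievable by measurable $t$ is the range of a vector integral over a non-atomic measure. The plan is to invoke Lyapunov's convexity theorem, in the form of Dvoretzky--Wald--Wolfowitz / Balder elimination of randomization, which applies because $P_X$ is non-atomic and the integrand is bounded and jointly measurable. This shows the deterministic range equals the randomized range $\mathcal K$.

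If a cleaner route is preferred, approximate $\kappa$ by kernels with finitely many atoms $u_1,\dots,u_m$ and weights $p_j(x)$; boundedness of $V$ keeps the objective and constraint errors under control. Then use Lyapunov's theorem for the $\mathbb R^{2m}$-valued non-atomic measure $B\mapsto(\mathbb E[p_j\mathbf 1_B u_j],\mathbb E[p_j\mathbf 1_B V_X(u_j)])_j$ to partition $\mathcal X$ into measurable sets $A_j$. Setting $t=u_j$ on $A_j$ reproduces both integrals exactly.

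If the approximation makes the constraint slightly violated, fix it by mixing with $t\equiv1$ on a set of small mass, again using non-atomicity via Sierpi\'nski's theorem. The cost is $O(\varepsilon)$ since $V$ is bounded. Letting $\varepsilon\to0$ gives $\operatorname{VAL}\le\operatorname{VAL}_{\mathrm{rel}}=\sup_\beta\phi(\beta)$, completing the proof.
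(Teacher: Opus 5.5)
Your main argument is the paper's proof: relax to kernels, get strong duality for the relaxed problem from the Slater point $\delta_1$, identify the inner infimum with $\phi(\beta)$ via the selector $t_\beta$ of Lemma~\ref{lem:interchange}, and derandomize with the Dvoretzky--Wald--Wolfowitz/Balder purification theorem using non-atomicity of $P_X$. Your Step~2 justification, via the convex image $\mathcal K\subset\mathbb R^2$ and ordinary one-constraint Lagrange duality, is if anything tighter than the paper's appeal to a Fenchel--Rockafellar theorem stated for finite-dimensional problems.

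The optional ``cleaner route'' in Step~3 does not work as written, for two reasons.

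\textbf{The discretization error is not controlled.} Approximating $\kappa(x,\cdot)$ by kernels on fixed atoms $u_1,\dots,u_m$ is not controlled by boundedness of $V$. The map $u\mapsto V_x(u)$ is only lower semicontinuous, and in classification it typically jumps at $x$-dependent points. Those are exactly where good kernels put their mass. For example, take a single action, $\mathcal Y=\{0,1\}$, $\ell(a,y)=y$ and $q(x)=\Pr(Y=0\mid X=x)$. Then $V_x(u)=1-u$ on $[0,q(x)]$ and $V_x(u)=1$ on $(q(x),1]$. Moving the mass at the natural minimizer $u=q(x)$ to a grid point just above it raises the objective by $q(x)$, not by $O(\mathrm{mesh})$.

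You can repair this in either of two ways:
\begin{enumerate}
\item Let the atoms depend on $x$. Write the barycenter $\int (u,V_x(u))\,\kappa(x,\mathrm du)$ as a measurable convex combination of at most three points on the graph of $V_x$, then partition. This is just the proof of the purification theorem.
\item Round only downward. Since $t\mapsto Q^x_t(a)$ is nondecreasing and $Q^x_t(a)\ge 0$, you get $V_x(s)\le V_x(t)+(t-s)M(a_t)$ for $s\le t$, where $a_t$ attains $V_x(t)$. This gives an $O(\mathrm{mesh})$ error provided $\sup_a M(a)<\infty$.
\end{enumerate}

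\textbf{The partition step is misformulated.} Lyapunov applied to the $p_j$-weighted vector measure you wrote does not produce the partition you need. The right tool is the DWW partition lemma. Apply it to the unweighted measures $B\mapsto \mathbb E[\mathbf 1_B V_X(u_j)]$ and $B\mapsto u_j P_X(B)$, with targets $\mathbb E[p_j V_X(u_j)]$ and $u_j\,\mathbb E[p_j]$.

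Your main line does not depend on this fallback, so the proof stands as long as you rely on the purification theorem itself.
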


{\bf Proof of Lemma~\ref{lem:no-gap}.}
We introduce a convex relaxation that allows randomization of $t$ conditional on $X$.
Let $\mathcal P([0,1])$ be the set of Borel probability measures on $[0,1]$, and let $\mathcal M$ be the set of (universally) measurable stochastic kernels
$x\mapsto \mu_x\in\mathcal P([0,1])$.
Consider the relaxed value
\[
\mathrm{VAL}_{\mathrm{rel}}(\alpha)
:=\inf_{\mu_{(\cdot)}\in\mathcal M}
\Big\{
\mathbb E\!\big[\textstyle\int V_X(u)\,\mu_X(\mathrm du)\big]
:\ 
\mathbb E\!\big[\textstyle\int u\,\mu_X(\mathrm du)\big]\ge 1-\alpha
\Big\}.
\tag{$\mathsf{P}_{\mathrm{rel}}$}
\]
The objective and constraint are linear in $\mu_{(\cdot)}$, so $(\mathsf{P}_{\mathrm{rel}})$ is a convex program.
Moreover, $\mu_x=\delta_1$ is strictly feasible when $\alpha>0$, hence Slater's condition holds.
By Fenchel--Rockafellar duality for convex integral functionals with a single linear moment constraint in the framework of normal integrands
(see, e.g., \cite{rockafellar1998variational} Thm.~11.39.),
there is no duality gap for $(\mathsf{P}_{\mathrm{rel}})$ and
\[
\mathrm{VAL}_{\mathrm{rel}}(\alpha)
=
\sup_{\beta\ge 0}\ 
\inf_{\mu_{(\cdot)}\in\mathcal M}
\left\{
\mathbb E\!\Big[\textstyle\int \big(V_X(u)-\beta u\big)\,\mu_X(\mathrm du)\Big]+\beta(1-\alpha)
\right\}.
\]

Fix $\beta\ge 0$ and write $f_\beta(x,u):=V_x(u)-\beta u$.
For any kernel $\mu_{(\cdot)}$ we have pointwise
$\int f_\beta(x,u)\,\mu_x(\mathrm du)\ge \inf_{u\in[0,1]} f_\beta(x,u)$, hence
\[
\inf_{\mu_{(\cdot)}\in\mathcal M}\ \mathbb E\!\Big[\textstyle\int f_\beta(X,u)\,\mu_X(\mathrm du)\Big]
\ \ge\
\mathbb E\!\Big[\inf_{u\in[0,1]} f_\beta(X,u)\Big].
\]
Conversely, by Lemma~\ref{lem:interchange} there exists a measurable selector $t_\beta$ with
$t_\beta(x)\in\arg\min_{u\in[0,1]} f_\beta(x,u)$ for $P_X$-a.e.\ $x$; taking $\mu_x=\delta_{t_\beta(x)}$ yields equality.
Therefore,
\[
\inf_{\mu_{(\cdot)}\in\mathcal M}
\left\{
\mathbb E\!\Big[\textstyle\int \big(V_X(u)-\beta u\big)\,\mu_X(\mathrm du)\Big]+\beta(1-\alpha)
\right\}
=
\beta(1-\alpha)+\mathbb E\!\left[\inf_{u\in[0,1]}\big\{V_X(u)-\beta u\big\}\right]
=\phi(\beta),
\]
and hence $\mathrm{VAL}_{\mathrm{rel}}(\alpha)=\sup_{\beta\ge 0}\phi(\beta)$.

It remains to show that relaxation does not change the value.
Any measurable $t$ yields a feasible kernel $\mu_x=\delta_{t(x)}$, so $\mathrm{VAL}_{\mathrm{rel}}(\alpha)\le \mathrm{VAL}(\alpha)$.
For the reverse inequality, fix any feasible kernel $\mu_{(\cdot)}$ in $(\mathsf{P}_{\mathrm{rel}})$ and consider the two bounded measurable functions
$h_1(x,u)=1-u$ and $h_2(x,u)=V_x(u)$ on $\mathcal X\times[0,1]$ (boundedness holds in our setting, since $V_x(t)\le M(a_0)$ for any fixed $a_0\in\mathcal A$), hence also are nonnegative normal integrands.
Since $P_X$ is non-atomic, the Dvoretzky--Wald--Wolfowitz purification theorem \citep{dvoretzky1951elimination, balder1985elimination} implies that there exists a measurable
$t:\mathcal X\to[0,1]$ such that 
\[
\mathbb E\!\left[V_X\big(t(X)\big)\right]
\leq \mathbb E\!\Big[\textstyle\int V_X(u)\,\mu_X(\mathrm du)\Big],
\qquad
\mathbb E\!\big[1 - t(X)\big]
\leq \mathbb E\!\Big[\textstyle\int (1-u)\,\mu_X(\mathrm du)\Big].
\]
The second inequality rewrites as
\[
\mathbb E\big[t(X)\big] \;\ge\; \mathbb E\!\Big[\textstyle\int u\,\mu_X(\mathrm du)\Big] \;\ge\; 1-\alpha,
\]
so $t$ is feasible for \eqref{eq:pop-problem}, while the first inequality shows that its objective value is no larger than that of $\mu(\cdot)$. Hence $\mathrm{VAL}(\alpha)\le \mathrm{VAL}_{\mathrm{rel}}(\alpha)$. Combining the two inequalities yields $\mathrm{VAL}(\alpha)=\mathrm{VAL}_{\mathrm{rel}}(\alpha)=\sup_{\beta\ge 0}\phi(\beta)$.
\qed

\section{Autonomous driving: construction details}
\label{app:bdd-toy}
This appendix provides the construction of the 3-bit hazard label $Y=(Y_a,Y_\ell,Y_r)\in\{0,1\}^3$ and the black-box probability model $f_x(y)$ used in the autonomous-driving experiment.

{\bf ROI parametrization.}
For any bounding box $b=(x^{(1)},y^{(1)},x^{(2)},y^{(2)})$ in an image of width $W$ and height $H$, we define
\[
u=\frac{x^{(1)}+x^{(2)}}{2W},\qquad
v_{\mathrm{bot}}=\frac{y^{(2)}}{H},\qquad
\rho=\frac{y^{(2)}-y^{(1)}}{H},
\]
where $u$ is the normalized horizontal center, $v_{\mathrm{bot}}$ is the normalized bottom coordinate, and $\rho$ is the normalized box height. Intuitively, $v_{\mathrm{bot}}$ and $\rho$ serve as simple proxies for proximity: objects that are larger and closer to the bottom of the image are treated as closer to the ego vehicle.

We define the following regions:
\[
\text{Left: }u\le \tfrac13,\qquad \text{Right: }u\ge \tfrac23,
\]
\[
\text{Ahead-close: }\tfrac13\le u\le \tfrac23,\ \ v_{\mathrm{bot}}\ge 0.6,\ \ \rho\ge 0.15,
\]
\[
\text{Side-close: }v_{\mathrm{bot}}\ge 0.5,\ \ \rho\ge 0.10.
\]

{\bf Ground-truth hazard bits from BDD100K annotations.}
For each image, we extract the annotated objects (category and 2D bounding box) from the BDD100K JSON labels.
We map categories \texttt{person} and \texttt{rider} to \emph{pedestrian} objects, and \texttt{car}, \texttt{truck}, \texttt{bus}, \texttt{train}, \texttt{motor}, \texttt{bike} to \emph{vehicle} objects.
The hazard bits are defined by the existence of at least one qualifying object in the corresponding ROI:
\[
Y_a=1\ \Longleftrightarrow\ \exists\ \text{(pedestrian or vehicle) box in the Ahead-close region},
\]
\[
Y_\ell=1 \Longleftrightarrow \exists \text{vehicle box with }(\text{Left})\ \&\ (\text{Side-close}),\ \
Y_r=1 \Longleftrightarrow \exists \text{vehicle box with }(\text{Right})\ \&\ (\text{Side-close}).
\]

{\bf Black-box probability model $f_x$.}
To obtain a black-box probability model $f_x$, 
we  run a pretrained \texttt{YOLO11} detector \citep{yolo11_ultralytics}  on each image and extract three scalar scores:
$s_a$ is the maximum confidence among detected pedestrian/vehicle boxes that satisfy the ahead-close rule; $s_\ell$ and $s_r$ are the analogous maxima over detected vehicle boxes in the left/right side-close regions. To convert detector scores into calibrated probabilities, we split the data into training/calibration/test, and we fit an isotonic regression map on training subset, $\phi_k(s)\approx \mathbb P(Y_k=1\mid s_k=s)$ for each bit $k\in\{a,\ell,r\}$. On the calibration and test subset, we set $p_k(x)=\phi_k(s_k(x))$ and define an 8-class distribution by conditional independence,
\[
f_x(y)= \prod_{k\in\{a,\ell,r\}} p_k(x)^{y_k}\big(1-p_k(x)\big)^{1-y_k},\
y\in\{0,1\}^3.\]

{\bf Actions and loss.}
We consider actions $\mathcal A=\{\textsf{STOP},\textsf{LEFT},\textsf{RIGHT},\textsf{KEEP}\}$.
Let $M\gg 1$ denote a catastrophic collision cost and define, for $y=(y_a,y_\ell,y_r)\in\{0,1\}^3$,
$\ell(\textsf{KEEP},y) := M\,\mathbf 1\{y_a=1\}$,
$$\ell(\textsf{LEFT},y) := M\,\mathbf 1\{y_\ell=1\}+c_{\text{turn}}+c_{\text{unnec}}\mathbf 1\{y_a=0\},$$
$$\ell(\textsf{RIGHT},y) := M\,\mathbf 1\{y_r=1\}+c_{\text{turn}}+c_{\text{unnec}}\mathbf 1\{y_a=0\},$$
$$\ell(\textsf{STOP},y) := c_{\text{stop,free}}\mathbf 1\{y_a=0\}+c_{\text{stop,block}}\mathbf 1\{y_a=1\}.$$
We use $M=60$, $c_{\text{turn}}=3$, $c_{\text{unnec}}=2$, $c_{\text{stop,free}}=6$, and $c_{\text{stop,block}}=2$.

\end{document}